\documentclass[11pt]{article}

\usepackage[a4paper,margin=24mm]{geometry}
\usepackage[T1]{fontenc}
\usepackage[utf8]{inputenc}
\usepackage{amsmath,amssymb,amsthm,mathtools,bm}
\usepackage{newtxtext,newtxmath}
\usepackage{microtype}
\usepackage{booktabs,array}
\usepackage{graphicx}
\usepackage{subcaption}
\usepackage{placeins}

\usepackage{enumitem}
\usepackage{tikz}
\usetikzlibrary{patterns,arrows.meta}
\usepackage{float}
\usetikzlibrary{arrows.meta,positioning,fit,calc,shapes.geometric,backgrounds}
\usepackage{pgfplots}
\pgfplotsset{compat=1.18}
\usepackage[numbers,sort&compress]{natbib}
\usepackage{authblk}
\usepackage{xcolor}
\usepackage[colorlinks=true,allcolors=blue!55!black]{hyperref}
\usepackage[nameinlink,noabbrev]{cleveref}

\setlist{nosep,leftmargin=1.5em}
\numberwithin{equation}{section}
\newtheorem{theorem}{Theorem}[section]

\newtheorem{remark}[theorem]{Remark}

\providecommand{\SHOWCHANGES}{0}
\ifnum\SHOWCHANGES=1
  \newcommand{\rev}[1]{{\color{red}#1}}
  \newenvironment{revision}{\color{red}}{}
\else
  \newcommand{\rev}[1]{#1}
  
\fi

\newcommand{\R}{\mathbb{R}}
\newcommand{\N}{\mathcal{N}}
\newcommand{\Sigmag}{\boldsymbol{\Sigma}_{\boldsymbol{\gamma}}}
\newcommand{\Sigmal}{\boldsymbol{\Sigma}_{\boldsymbol{\lambda}}}
\newcommand{\Sigmay}{\boldsymbol{\Sigma}_{y}}
\newcommand{\Sigmapost}{\boldsymbol{\Sigma}_{\mathrm{post}}}

\newcommand{\Rdata}{\mathbf{R}_{\mathrm{data}}}
\newcommand{\Bgamma}{\mathbf{B}_{\boldsymbol{\gamma}}}
\newcommand{\Agl}{\mathbf{A}_{\boldsymbol{\gamma},\boldsymbol{\lambda}}}
\newcommand{\I}{\mathbf{I}}
\newcommand{\diag}{\operatorname{diag}}
\newcommand{\Null}{\operatorname{Null}}
\newcommand{\Range}{\operatorname{Range}}
\newcommand{\rank}{\operatorname{rank}}
\newcommand{\Cov}{\operatorname{Cov}}
\newcommand{\Var}{\operatorname{Var}}

\title{\textbf{Exact Spectral Uncertainty Decomposition for Linear-Gaussian Bayesian M/EEG Inverse Problems}}

\author[1,2]{Ismail Huseynov}
\author[1,2,3]{Stefan Haufe}
\affil[1]{Physikalisch-Technische Bundesanstalt (PTB), Berlin, Germany}
\affil[2]{Technische Universit\"at Berlin, Berlin, Germany}
\affil[3]{Charit\'e--Universit\"atsmedizin Berlin, Berlin, Germany}
\affil[ ]{\texttt{ismail.huseynov@ptb.de} \quad \texttt{haufe@tu-berlin.de}}
\date{}

\begin{document}
\maketitle
\vspace{-1.6em}

\begin{abstract}
Distributed magneto- and electroencephalographic (M/EEG) source reconstruction is a severely underdetermined linear inverse problem. Bayesian source imaging regularizes the inversion through source-prior and sensor-noise covariance models, but the posterior covariance does not directly distinguish uncertainty that is constrained by the measurements from uncertainty that is unobservable under the sensing geometry. We study this distinction for linear-Gaussian M/EEG models with diagonal source-prior and sensor-noise covariances, covering both Type-I inference and Type-II inference conditional on learned covariance hyperparameters.

After restricting the analysis to the prior-supported source space, standardizing the prior, and whitening the sensor noise, the complete singular spectrum of the resulting forward operator separates \textit{likelihood-visible} and \textit{likelihood-null} directions. This yields an \textit{additive uncertainty decomposition} of the conditional posterior covariance: uncertainty in \textit{likelihood-visible} directions is reduced according to their spectral information content, whereas \textit{likelihood-null} uncertainty remains unchanged from the prior. The same spectral representation yields a complementary\textit{ prior-minus-reduction identity}, showing that covariance reduction is confined to the \textit{likelihood-visible} subspace and is quantified by \textit{spectral markers} describing likelihood information and prior-variance retention and reduction.

Fixed-orientation MEG simulations with adaptive noise learning illustrate the framework. Dense Bayesian minimum-norm inference retains a large likelihood-null component, whereas sparse $\gamma$-MAP strongly reduces the fitted prior support. Within that support, the spectral markers distinguish strongly informed directions from a weakly constrained direction. Thus, prior-support selection determines which source coordinates remain in the fitted model, while the spectral decomposition quantifies how strongly directions within that source space are constrained by the measurements.
\end{abstract}

\textbf{Keywords:}
M/EEG inverse problem;
linear-Gaussian Bayesian inversion;
posterior covariance;
spectral uncertainty decomposition;
likelihood-null subspace;
prior-preconditioned data-misfit Hessian

\section{Introduction}
\label{sec:intro}

Electroencephalography (EEG) and magnetoencephalography (MEG) measure electric potentials and magnetic fields generated by neural currents. Under the quasi-static approximation to Maxwell's equations, discretization of the electromagnetic forward problem gives
\begin{equation}
  \mathbf Y
  =
  \mathbf L\mathbf X+\mathbf E,
  \label{eq:forward-matrix}
\end{equation}
where
$\mathbf L\in\R^{M\times N}$ is the lead-field matrix, $\mathbf X\in\R^{N\times T}$ contains the amplitudes of $N$ candidate sources at $T$ time instants, $\mathbf Y\in\R^{M\times T}$ contains the corresponding measurements at $M$ sensors and $T$ time points, and $\mathbf E\in\R^{M\times T}$ denotes additive measurement noise
\citep{Sarvas1987,Hamalainen1993,Mosher1999,Baillet2001}. In distributed source imaging, typically $M\ll N$, so the inverse problem is severely underdetermined. The lead field is non-injective
and often strongly ill-conditioned: exact null directions lead to non-uniqueness, whereas directions associated with small nonzero
singular values are only weakly constrained by the measurements and are sensitive to measurement and modeling errors \citep{Kaipio2005,Stuart2010}.

Bayesian M/EEG inversion regularizes this problem by combining the likelihood with a prior distribution on the sources. 
We formulate the inverse problem within a linear-Gaussian Bayesian model with diagonal source-prior and sensor-noise covariances.

Once these covariance parameters are fixed, the conditional source posterior is Gaussian. In Type-I Bayesian learning, the covariance parameters are prescribed independently of the current data, which includes classical Gaussian-prior and minimum-norm formulations. In Type-II Bayesian learning, the covariance hyperparameters are instead estimated from the marginal likelihood and subsequently inserted into the same conditional Gaussian posterior \citep{WipfNagarajan2009,SekiharaNagarajan2015,HashemiEtAl2021}. Such evidence-based covariance learning is widely used in Bayesian electromagnetic source imaging and allows the effective source and noise covariance structure to be adapted to the observed data
\citep{Friston2008,WipfEtAl2010,HashemiEtAl2021,CaiEtAl2021}.

Posterior covariance provides a natural measure of conditional source uncertainty, but in its conventional matrix form it does not directly reveal which source directions are strongly constrained by the measurements and which remain weakly or not constrained at all. In spectral terms, directions associated with large singular values are strongly informed by the data, whereas directions associated with small nonzero singular values are only weakly informed and directions in the nullspace are exactly unobservable. For practical source reconstruction, the distinction between weakly informed and exactly unobservable directions may be less consequential, because both can carry little usable measurement information. Moreover, the effective separation between these cases can be influenced by measurement noise and forward-model uncertainty, even when the forward operator is treated as fixed in the analysis. The spectral formulation developed below therefore characterizes measurement informativeness continuously across source directions while retaining the exact nullspace as the mathematically unobservable limiting case.


Closely related spectral structure has been extensively studied in large-scale and high-dimensional Bayesian inverse problems, especially
for PDE-constrained models, where direct manipulation of posterior covariance matrices is computationally prohibitive
\citep{Flath2011,BuiThanh2013}. The prior-preconditioned data-misfit Hessian and related generalized eigenvalue problems provide a natural
description of directions that are strongly or weakly informed by the data. This structure has been used to construct scalable low-rank
posterior covariance approximations
\citep{Flath2011,BuiThanh2013,Spantini2015}, to compress space-time posterior covariances and accelerate hierarchical sampling
\citep{BennerQiuStoll2018,BrownSaibabaVallelian2018}, and to define likelihood-informed subspaces for nonlinear inverse problems \citep{CuiEtAl2014}. The central objective of this literature is
typically computational: dominant data-informed directions are retained so that the remaining high-dimensional structure need not be treated at full cost.

Our objective is different. Within the conditional linear-Gaussian model, with the lead field and covariance parameters treated as given, we retain the complete singular structure of the noise-whitened, prior-standardized forward operator to obtain an \emph{exact} spectral decomposition of the posterior covariance. The construction applies to both nonsingular and singular source-prior covariances: in the nonsingular case, prior standardization is defined by the principal square root of the prior covariance, whereas in the singular case it is restricted to the \textit{prior-supported source space} without requiring an inverse or pseudoinverse.

Prior-preconditioned Hessian methods have primarily exploited the dominant data-informed spectrum to construct low-rank approximations of posterior covariance and related computational reductions
\citep{Flath2011,BuiThanh2013,Spantini2015}. In contrast, we retain the complete positive spectrum together with the \emph{likelihood-null} subspace. This yields an \emph{exact} decomposition of the conditional posterior covariance into \emph{likelihood-visible} and \emph{likelihood-null} contributions, with the latter remaining unchanged from its prior value. The same spectral representation also yields a complementary prior-minus-reduction identity, showing that covariance reduction induced by conditioning acts exclusively through the \emph{likelihood-visible} subspace.


This separation also provides a direct interpretation in physical source coordinates. The prior-supported  likelihood-null component corresponds to source perturbations that are both supported by the prior and invisible to the lead field, i.e., directions in $\Range(\Sigmag)\cap\Null(\mathbf L)$. In physical source space, the likelihood-null directions are precisely those source perturbations that remain supported by the prior but produce no sensor response. This separates two different mechanisms: source directions that are unobservable because of the lead-field geometry and source directions that are excluded because the prior assigns them zero variance. The complete spectrum further quantifies the amount of likelihood information and the corresponding retention and reduction of prior variance along the likelihood-visible directions.


The prior-minus-posterior covariance structure itself is not new. Related posterior-update identities are standard in linear-Gaussian
Bayesian inverse problems and underlie several prior-preconditioned Hessian methods \citep{Flath2011,BuiThanh2013,Spantini2015}. The emphasis here is instead on the complete visible/null accounting: the covariance reduction is localized explicitly to the likelihood-visible part of the prior-supported source space, while the exact likelihood-null component is retained. A related image/kernel viewpoint has also appeared in linearized Laplace approximations for Bayesian neural networks \citep{Roy2024}, although there the decomposition concerns local parameter-function geometry in an approximate nonlinear model rather than the exact conditional covariance of a linear-Gaussian inverse problem.

In the numerical study, we illustrate the decomposition in a controlled fixed-orientation M/EEG simulation with adaptive noise learning. Dense BMN with sLORETA normalization retains the full source space and, consequently, a large exact likelihood-null component, whereas sparse $\gamma$-MAP restricts the fitted prior support to a small set containing all simulated generators. Within this reduced support, the spectral markers further distinguish strongly informed directions from an additional retained component that remains only weakly constrained by the measurements. The experiment therefore highlights an important distinction between \emph{prior-support selection} and \emph{likelihood informativeness}: Type-II sparsification can reduce the effective source space without implying that every retained direction is equally supported by the data. 

The main contributions of this work are as follows. First, we derive an exact spectral factorization of the conditional Gaussian posterior on the prior-supported source space and identify its likelihood-visible and likelihood-null components. Second, we obtain an exact
source-space posterior covariance decomposition and a complementary prior-minus-reduction identity, showing that data-driven covariance
reduction acts exclusively through the mapped likelihood-visible subspace. Third, we characterize the physical unobservable source
subspace as the intersection of the prior-supported source space with the lead-field nullspace, thereby separating sensing-induced
non-identifiability from prior-support restriction. Finally, we derive spectral information and variance markers that quantify, direction by direction, the strength of likelihood information, the fraction of prior variance remaining in the posterior, and the fraction removed by the data. Together, these results provide an exact conditional framework for determining which parts of M/EEG source uncertainty are reduced by the available measurements and which remain unresolved by the linear forward model.

\section{Bayesian M/EEG source model}
\label{sec:model}

\subsection{Linear forward model}

For each $t=1,\ldots,T$, consider the linear-Gaussian M/EEG model
\begin{equation}
  \mathbf y(t)
  =
  \mathbf L\mathbf x(t)
  +
  \boldsymbol\varepsilon(t),
  \qquad
  \mathbf x(t)\mid\boldsymbol\gamma
  \sim
  \N(\mathbf0,\Sigmag),
  \qquad
  \boldsymbol\varepsilon(t)\mid\boldsymbol\lambda
  \sim
  \N(\mathbf0,\Sigmal),
  \label{eq:model}
\end{equation}
with
\begin{equation*}
  \Sigmag
  :=
  \diag(\gamma_1,\ldots,\gamma_N)
  \succeq0,
  \qquad
  \Sigmal
  :=
  \diag(\lambda_1,\ldots,\lambda_M)
  \succ0.
\end{equation*}
The vector $\boldsymbol\gamma\in\R_+^N$ parameterizes the source-space prior covariance and controls the prior variance assigned to each candidate source. The vector $\boldsymbol\lambda\in\R_{>0}^M$ parameterizes the sensor-space noise covariance and allows different
noise variances across sensors.

We assume that, conditional on the covariance parameters
$\boldsymbol\gamma$ and $\boldsymbol\lambda$, the source vectors $\mathbf x(t)$ and noise vectors $\boldsymbol\varepsilon(t)$ are independent across time, with the same source and noise covariance
parameters shared over $t=1,\ldots,T$. After marginalizing the latent source vector, each sensor vector has the Gaussian distribution
\begin{equation}
  \mathbf y(t)\mid\boldsymbol\gamma,\boldsymbol\lambda
  \sim
  \N\!\left(
    \mathbf0,
    \Sigmay(\boldsymbol\gamma,\boldsymbol\lambda)
  \right),
\end{equation}
with marginal sensor-space covariance
\begin{equation}
  \Sigmay(\boldsymbol\gamma,\boldsymbol\lambda)
  =
  \underbrace{
  \mathbf L\Sigmag\mathbf L^{\mathsf T}
  }_{\text{epistemic}}
  +
  \underbrace{
  \Sigmal.
  }_{\text{aleatoric}}
  \label{eq:sensor-covariance}
\end{equation}

Thus, the marginal covariance of the measurements admits a natural \textit{additive uncertainty decomposition} in sensor space. The first
term is the source uncertainty propagated through the lead field and represents the epistemic contribution conditional on the chosen source-prior parameters. The second term represents aleatoric variability generated by measurement noise.

\subsection{Type-I and Type-II Bayesian learning}

The distinction between Type-I and Type-II Bayesian learning concerns how the covariance hyperparameters are chosen, not the form of the conditional source posterior. Once $(\boldsymbol\gamma,\boldsymbol\lambda)$ are fixed, both approaches use the same linear-Gaussian posterior distribution.

For fixed covariance parameters, source estimation can be formulated as maximum-a-posteriori (MAP) inference with respect to the latent source vector $\mathbf x(t)$. In the present linear-Gaussian model, the posterior is Gaussian and its MAP estimate coincides with its posterior mean \citep{SekiharaNagarajan2015}. Hence, the conditional posterior takes the form
\begin{equation}
  p\!\left(
  \mathbf x(t)\mid
  \mathbf y(t),\boldsymbol\gamma,\boldsymbol\lambda
  \right)
  =
  \N\!\left(
  \boldsymbol\mu_{\mathrm{post}}(t),
  \Sigmapost
  \right),
  \label{eq:conditional-posterior}
\end{equation}
with posterior summaries
\begin{equation}
  \boldsymbol\mu_{\mathrm{post}}(t)
  =
  \Sigmag\mathbf L^{\mathsf T}
  \Sigmay(\boldsymbol\gamma,\boldsymbol\lambda)^{-1}
  \mathbf y(t),
  \qquad
  \Sigmapost
  =
  \Sigmag
  -
  \Sigmag\mathbf L^{\mathsf T}
  \Sigmay(\boldsymbol\gamma,\boldsymbol\lambda)^{-1}
  \mathbf L\Sigmag .
  \label{eq:posterior-summary}
\end{equation}


\paragraph{Type-I Bayesian learning.}
In Type-I learning, source inference is performed for a specified prior and noise model, with the associated covariance or regularization parameters fixed a priori or selected using auxiliary or data-dependent criteria. These parameters need not be independent of the data. In practice, regularization parameters may be
selected using criteria such as the L-curve, the discrepancy principle, or spatial cross-validation
\citep{HansenOLeary1993,EnglHankeNeubauer1996,NegiEtAl2026}.
Noise levels may likewise be specified from prior knowledge or estimated from baseline measurements, and residual-based criteria may be used to select the regularization strength by matching the residual level to the presumed noise level \citep{HamalainenIlmoniemi1994,SekiharaNagarajan2015}.
For homoscedastic Gaussian prior and noise models, the regularization parameter is determined by the relative scaling of the prior and noise variances, so that estimating or fixing these quantities also determines the corresponding regularization strength. Once these parameters are specified, source estimation proceeds directly from the conditional Gaussian posterior in
\cref{eq:conditional-posterior,eq:posterior-summary}
\citep{HashemiEtAl2021,SekiharaNagarajan2015}.

\paragraph{Type-II Bayesian learning.}
In Type-II Bayesian learning, the covariance hyperparameters are estimated from the data by maximizing the marginal likelihood, also referred to as the model evidence. Marginalizing the latent source vectors yields, up to an additive constant independent of the hyperparameters,
\begin{equation}
  \mathcal L_{\mathrm{II}}
  (\boldsymbol\gamma,\boldsymbol\lambda)
  =
  \frac{T}{2}
  \log\det
  \Sigmay(\boldsymbol\gamma,\boldsymbol\lambda)
  +
  \frac12
  \sum_{t=1}^{T}
  \mathbf y(t)^{\mathsf T}
  \Sigmay(\boldsymbol\gamma,\boldsymbol\lambda)^{-1}
  \mathbf y(t).
  \label{eq:typeII-objective}
\end{equation}

The covariance hyperparameters are obtained by minimizing
\eqref{eq:typeII-objective} over the admissible parameter domain. The resulting point estimates $(\widehat{\boldsymbol\gamma},
\widehat{\boldsymbol\lambda})$ are then fixed and inserted into the conditional Gaussian posterior in \cref{eq:posterior-summary}. Consequently, the posterior covariance considered here quantifies source uncertainty conditional on these plug-in hyperparameter estimates and does not propagate uncertainty in the hyperparameters themselves.

For M/EEG source imaging, the sparse Type-II formulation considered here corresponds to the SBL or $\gamma$-MAP framework underlying Champagne-type algorithms
\citep{WipfNagarajan2009,WipfEtAl2010,SekiharaNagarajan2015}. In the standard $\gamma$-MAP formulation, the source-variance hyperparameters are learned while the noise level is fixed or estimated from baseline measurements. In the adaptive-noise extension, the source and noise hyperparameters are jointly learned within the same Type-II framework, typically through alternating updates, leading to $\gamma$-MAP with adaptive noise learning \citep{HashemiEtAl2021,CaiEtAl2021}.

\subsection{Prior standardization and noise whitening}

Throughout this work, the source-prior covariance is assumed to be diagonal,
\[
\Sigmag=\diag(\boldsymbol\gamma), \qquad \gamma_i\geq 0,
\]
so that the prior variances are associated directly with the source
coordinates. This parametrization covers both the nonsingular case, in which all source variances are strictly positive, and the singular case, in which one or more source variances vanish. The latter arises naturally in sparse Type-II/SBL estimation, where evidence optimization may drive selected variance parameters to zero. A singular prior could also encode the a priori exclusion of specific source locations, although such locations would commonly be removed from the forward model itself.

To treat the nonsingular and singular cases within the same construction, we define the index set of source coordinates carrying nonzero prior variance,
\begin{equation*}
  \mathcal I_{\boldsymbol\gamma}
  :=
  \left\{
    i\in\{1,\ldots,N\}:\gamma_i>0
  \right\},
  \qquad
  q:=|\mathcal I_{\boldsymbol\gamma}|.
\end{equation*}
These coordinates span the \emph{prior-supported source space}. Let
$\mathbf P_{\boldsymbol\gamma}\in\R^{N\times q}$ denote the canonical selection matrix whose columns are the standard basis vectors indexed by $\mathcal I_{\boldsymbol\gamma}$, and define
\begin{equation}
  \Bgamma
  :=
  \mathbf P_{\boldsymbol\gamma}
  \diag\!\left(
    \sqrt{\gamma_i}:
    i\in\mathcal I_{\boldsymbol\gamma}
  \right)\in\R^{N\times q},
  \qquad
  \Sigmag
  =
  \Bgamma\Bgamma^{\mathsf T}.
  \label{eq:active-factor}
\end{equation}
When $\Sigmag$ is singular, this construction provides a reduced
parametrization of the prior-supported source space without requiring either $\Sigmag^{-1}$ or a pseudoinverse.

\begin{remark}
If $\gamma_i>0$ for all $i=1,\ldots,N$, then
$\mathcal I_{\boldsymbol\gamma}=\{1,\ldots,N\}$, $q=N$, and
$\Bgamma=\Sigmag^{1/2}$, so that the construction reduces to the usual
principal square-root standardization.

The diagonal parametrization adopted here represents prior variability
directly in the source coordinates. More general positive-semidefinite source priors can analogously be represented through a factor spanning their positive-eigenvalue subspace, which permits prior structure or sparsity to be expressed in a noncanonical basis. Such formulations arise, for example, in multiple-sparse-prior approaches to M/EEG source reconstruction \citep{FristonEtAl2008} and in inverse problems employing transform-domain sparsity, as commonly used in MRI reconstruction \citep{LustigDonohoPauly2007}. These extensions are not required for the source-wise diagonal parametrization considered in the present analysis.
\end{remark}

The source vector can then be represented on the \textit{prior-supported source space} as
\begin{equation}
  \mathbf x(t)
  =
  \Bgamma\mathbf z(t),
  \qquad
  \mathbf z(t)
  \sim
  \N(\mathbf0,\I_q).
  \label{eq:latent-map}
\end{equation}
Thus, $\mathbf z(t)$ provides standardized latent coordinates in which the source prior is isotropic. 


We next whiten the measurement model with respect to the sensor-noise
covariance. We assume that the sensor-noise covariance $\Sigmal$ is full rank, equivalently positive definite, so that $\Sigmal^{-1/2}$ is well defined. Define
\begin{equation}
  \widetilde{\mathbf y}(t)
  :=
  \Sigmal^{-1/2}\mathbf y(t),
  \qquad
  \widetilde{\boldsymbol\varepsilon}(t)
  :=
  \Sigmal^{-1/2}\boldsymbol\varepsilon(t),
  \label{eq:whitened-variables}
\end{equation}
and the noise-whitened, prior-standardized forward operator
\begin{equation}
  \Agl
  :=
  \Sigmal^{-1/2}\mathbf L\Bgamma
  \in\R^{M\times q}.
  \label{eq:whitened-operator}
\end{equation}
Substitution of $\mathbf x(t)=\Bgamma\mathbf z(t)$ into the original forward model gives
\begin{equation}
  \widetilde{\mathbf y}(t)
  =
  \Agl\mathbf z(t)
  +
  \widetilde{\boldsymbol\varepsilon}(t),
  \qquad
  \widetilde{\boldsymbol\varepsilon}(t)
  \sim
  \N(\mathbf0,\I_M).
  \label{eq:whitened-model}
\end{equation}
Hence, both the latent source prior and the sensor-noise covariance are represented by identity matrices, while $\Agl$ contains the remaining forward-model geometry relative to the prior and noise scales.

\begin{remark}
The whitening construction above assumes that the sensor-noise covariance has full rank in the measurement space. This assumption may be violated when the sensor data have undergone rank-reducing linear preprocessing, as in common-average referencing or the removal of selected components following independent component analysis (ICA) \citep{HyvarinenOja2000}. In such cases, the measurement model should first be restricted to the effective sensor subspace associated with the nonzero spectrum of the transformed noise covariance.
Whitening is then performed with respect to the positive-definite covariance induced on that reduced subspace. This provides the sensor-space analogue of the reduced prior parametrization introduced above for singular $\Sigmag$.
\end{remark}

The posterior in the standardized coordinates follows directly from Bayes' rule:
\begin{align*}
  p\!\left(
    \mathbf z(t)\mid\widetilde{\mathbf y}(t)
  \right)
  &\propto
  p\!\left(
    \widetilde{\mathbf y}(t)\mid\mathbf z(t)
  \right)
  p\!\left(\mathbf z(t)\right)
  \nonumber\\
  &\propto
  \exp\!\left\{
    -\frac12
    \left\|
      \widetilde{\mathbf y}(t)-\Agl\mathbf z(t)
    \right\|_2^2
    -
    \frac12
    \left\|
      \mathbf z(t)
    \right\|_2^2
  \right\}.
\end{align*}
Collecting the terms that depend on $\mathbf z(t)$ gives
\begin{equation*}
  -\frac12
  \mathbf z(t)^{\mathsf T}
  \left(
    \I_q+\Agl^{\mathsf T}\Agl
  \right)
  \mathbf z(t)
  +
  \mathbf z(t)^{\mathsf T}
  \Agl^{\mathsf T}
  \widetilde{\mathbf y}(t).
\end{equation*}
Completing the square therefore yields
\begin{equation*}
  \mathbf z(t)\mid\widetilde{\mathbf y}(t)
  \sim
  \N\!\left(
    \boldsymbol\mu_{z\mid y}(t),
    \boldsymbol\Sigma_{z\mid y}
  \right),
\end{equation*}
where
\begin{equation*}
  \boldsymbol\Sigma_{z\mid y}
  =
  \left(
    \I_q+\Agl^{\mathsf T}\Agl
  \right)^{-1},
  \qquad
  \boldsymbol\mu_{z\mid y}(t)
  =
  \boldsymbol\Sigma_{z\mid y}
  \Agl^{\mathsf T}
  \widetilde{\mathbf y}(t).
  \label{eq:latent-posterior}
\end{equation*}
Mapping this Gaussian posterior back through $\mathbf x(t)=\Bgamma\mathbf z(t)$ gives
\begin{equation*}
  \boldsymbol\mu_{\mathrm{post}}(t)
  =
  \Bgamma\boldsymbol\mu_{z\mid y}(t),
  \qquad
  \Sigmapost
  =
  \Bgamma
  \boldsymbol\Sigma_{z\mid y}
  \Bgamma^{\mathsf T},
  \label{eq:source-posterior-latent}
\end{equation*}
which recovers the source-space posterior summaries in \cref{eq:posterior-summary}.


For fixed covariance parameters $\boldsymbol\gamma$ and
$\boldsymbol\lambda$, consider the negative log-likelihood as a
function of the standardized source coefficients $\mathbf z$:
\[
  \Phi_t(\mathbf z)
  :=
  \frac12
  \left\|
    \widetilde{\mathbf y}(t)-\Agl\mathbf z
  \right\|_2^2,
\]
where terms independent of $\mathbf z$ have been omitted.
Its Hessian with respect to $\mathbf z$ is
\begin{equation}
  \mathbf H_{\boldsymbol\gamma,\boldsymbol\lambda}
  :=
  \nabla_{\mathbf z}^2\Phi_t(\mathbf z)
  =
  \Agl^{\mathsf T}\Agl
  =
  \Bgamma^{\mathsf T}
  \mathbf L^{\mathsf T}
  \Sigmal^{-1}
  \mathbf L
  \Bgamma
  \succeq0.
  \label{eq:pphessian}
\end{equation}
Thus, $\mathbf H_{\boldsymbol\gamma,\boldsymbol\lambda}$ is the
\emph{prior-preconditioned data-misfit Hessian}: it expresses the
likelihood curvature in prior-standardized source coordinates.
The covariance hyperparameters are held fixed when taking these
derivatives. Since the forward model is linear, this Hessian is
constant with respect to $\mathbf z$ and independent of the observed
measurement values.

For a unit eigenvector $\mathbf v_i$ with eigenvalue $\kappa_i$,
\[
  \kappa_i
  =
  \mathbf v_i^{\mathsf T}
  \mathbf H_{\boldsymbol\gamma,\boldsymbol\lambda}\mathbf v_i
  =
  \|\Agl\mathbf v_i\|_2^2.
\]
Hence, $\kappa_i$ measures the likelihood precision along that
standardized direction relative to the unit prior precision.
The Gaussian prior contributes $\mathbf I_q$ to the negative
log-posterior Hessian, giving posterior covariance
$(\mathbf I_q+\mathbf H_{\boldsymbol\gamma,\boldsymbol\lambda})^{-1}$.
Large eigenvalues therefore indicate strongly informed directions,
whereas zero eigenvalues identify directions along which the
likelihood is constant and the standardized prior variance is
preserved.

\begin{remark}
The nonzero eigenvalues of
$\mathbf H_{\boldsymbol\gamma,\boldsymbol\lambda}$ are the squared nonzero singular values of $\Agl$. Prior-preconditioned Hessian methods commonly exploit spectral decay to construct low-rank posterior covariance approximations
\citep{Flath2011,BuiThanh2013,Spantini2015}. Related constructions have been developed for covariance compression and hierarchical sampling \citep{BennerQiuStoll2018,BrownSaibabaVallelian2018},while likelihood-informed subspaces use a related prior-versus-likelihood geometry for nonlinear inverse problems \citep{CuiEtAl2014}. Here, we instead retain the complete positive spectrum together with the exact zero eigenspace because our objective is exact conditional covariance accounting rather than low-rank approximation.
\end{remark}

\section{Exact spectral posterior decomposition}
\label{sec:decomposition}

\subsection{Likelihood-visible and likelihood-null coordinates}

Once the covariance parameters $\boldsymbol\gamma$ and $\boldsymbol\lambda$ are fixed, the noise-whitened, prior-standardized operator $\Agl\in\R^{M\times q}$ is common to all
$t=1,\ldots,T$. Hence, the same spectral subspaces apply at every time point. 

Let
\[
  r:=\rank(\Agl)\leq\min\{M,q\}.
\]
The compact singular value decomposition of $\Agl$ is
\begin{equation}
  \Agl
  =
  \mathbf U_r
  \mathbf S_r
  \mathbf V_r^{\mathsf T},
  \qquad
  \mathbf S_r
  =
  \diag(s_1,\ldots,s_r),
  \qquad
  s_1\geq\cdots\geq s_r>0,
  \label{eq:svd}
\end{equation}
where
$\mathbf U_r\in\R^{M\times r}$,
$\mathbf S_r\in\R^{r\times r}$, and
$\mathbf V_r\in\R^{q\times r}$, with
\[
  \mathbf U_r^{\mathsf T}\mathbf U_r=\I_r,
  \qquad
  \mathbf V_r^{\mathsf T}\mathbf V_r=\I_r.
\]
The columns of $\mathbf U_r$ form an orthonormal basis of $\Range(\Agl)\subseteq\R^M$, whereas the columns of $\mathbf V_r$ form an orthonormal basis of
\[
  \Range(\Agl^{\mathsf T})
  =
  \Null(\Agl)^\perp
  \subseteq\R^q.
\]
We refer to the latter as the \emph{likelihood-visible subspace}. The scalars $s_i$ are the positive singular values of $\Agl$, and $s_i^2$ are the corresponding positive eigenvalues of
$\mathbf H_{\boldsymbol\gamma,\boldsymbol\lambda}$.

To represent the complementary directions in the prior-supported source space, let
$\mathbf V_0\in\R^{q\times(q-r)}$ be an orthonormal basis of $\Null(\Agl)$, which we call the \emph{likelihood-null subspace}, and define
\[
  \mathbf V
  :=
  [\,\mathbf V_r\ \mathbf V_0\,]
  \in\R^{q\times q}.
\]
Because
\[
  \R^q
  =
  \Range(\Agl^{\mathsf T})
  \oplus
  \Null(\Agl),
\]
the matrix $\mathbf V$ is orthogonal. Hence,
\[
  \mathbf V_r^{\mathsf T}\mathbf V_r=\I_r,
  \qquad
  \mathbf V_0^{\mathsf T}\mathbf V_0=\I_{q-r},
  \qquad
  \mathbf V_r^{\mathsf T}\mathbf V_0=\mathbf0,
\]
and
\[
  \mathbf V_r\mathbf V_r^{\mathsf T}
  +
  \mathbf V_0\mathbf V_0^{\mathsf T}
  =
  \I_q,
  \qquad
  \Agl\mathbf V_0=\mathbf0.
\]
The null-space basis $\mathbf V_0$ will be used to express
the posterior uncertainty in directions that are not constrained by the likelihood. A full SVD would additionally introduce an orthonormal basis of $\Null(\Agl^{\mathsf T})$ in sensor space. These left-null directions do not enter $\mathbf H_{\boldsymbol\gamma,\boldsymbol\lambda}$ and are therefore not needed for the source-space posterior decomposition. For a dense prior, $q=N$, whereas sparse Type-II learning may yield $q<N$.

For each $t=1,\ldots,T$, define the coordinates of $\mathbf z(t)$ in the two orthogonal subspaces by
\begin{equation}
  \boldsymbol\alpha(t)
  :=
  \mathbf V_r^{\mathsf T}\mathbf z(t)
  \in\R^r,
  \qquad
  \boldsymbol\beta(t)
  :=
  \mathbf V_0^{\mathsf T}\mathbf z(t)
  \in\R^{q-r},
  \qquad
  \mathbf z(t)
  =
  \mathbf V_r\boldsymbol\alpha(t)
  +
  \mathbf V_0\boldsymbol\beta(t).
  \label{eq:alpha-beta}
\end{equation}
Here, $\boldsymbol\alpha(t)$ contains the \textit{likelihood-visible coordinates} and $\boldsymbol\beta(t)$ the \textit{likelihood-null coordinates}. The reconstruction of $\mathbf z(t)$ follows from the orthogonal decomposition of $\R^q$ above. Since $\mathbf z(t)\sim\N(\mathbf0,\I_q)$ and $\mathbf V$ is orthogonal,
\[
  \boldsymbol\alpha(t)\sim\N(\mathbf0,\I_r),
  \qquad
  \boldsymbol\beta(t)\sim\N(\mathbf0,\I_{q-r}),
\]
and the two coordinate blocks are independent under the prior.

Substituting \cref{eq:alpha-beta} into the whitened model gives
\begin{equation}
  \widetilde{\mathbf y}(t)
  =
  \mathbf U_r\mathbf S_r\boldsymbol\alpha(t)
  +
  \widetilde{\boldsymbol\varepsilon}(t),
  \label{eq:alpha-beta-model}
\end{equation}
because $\Agl\mathbf V_r=\mathbf U_r\mathbf S_r$
and $\Agl\mathbf V_0=\mathbf0$. Hence, the likelihood depends only on the likelihood-visible
coordinates $\boldsymbol\alpha(t)$; the likelihood-null coordinates $\boldsymbol\beta(t)$ produce no change in the conditional mean of the whitened measurements.

\begin{theorem}[Posterior in likelihood-visible and likelihood-null coordinates]
\label{thm:joint-visible-null}

For every $t=1,\ldots,T$, define
\begin{equation}
  \mathbf C_{\alpha}
  :=
  \left(
    \I_r+\mathbf S_r^2
  \right)^{-1},
  \qquad
  \overline{\boldsymbol\alpha}(t)
  :=
  \mathbf C_{\alpha}
  \mathbf S_r
  \mathbf U_r^{\mathsf T}
  \widetilde{\mathbf y}(t).
  \label{eq:alpha-moments}
\end{equation}
Then
\begin{equation}
  \begin{bmatrix}
    \boldsymbol\alpha(t)\\
    \boldsymbol\beta(t)
  \end{bmatrix}
  \Bigm|
  \widetilde{\mathbf y}(t)
  \sim
  \N\!\left(
  \begin{bmatrix}
    \overline{\boldsymbol\alpha}(t)\\
    \mathbf0
  \end{bmatrix},
  \begin{bmatrix}
    \mathbf C_{\alpha} & \mathbf0\\
    \mathbf0 & \I_{q-r}
  \end{bmatrix}
  \right).
  \label{eq:joint-posterior}
\end{equation}
Consequently,
$p(\boldsymbol\beta(t)\mid\widetilde{\mathbf y}(t))
=p(\boldsymbol\beta(t))$, and the likelihood-visible and likelihood-null coordinates remain independent in the conditional posterior.
\end{theorem}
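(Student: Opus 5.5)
The plan is to start from the latent posterior already derived in \cref{sec:model}, namely $\mathbf z(t)\mid\widetilde{\mathbf y}(t)\sim\N(\boldsymbol\mu_{z\mid y}(t),\boldsymbol\Sigma_{z\mid y})$ with $\boldsymbol\Sigma_{z\mid y}=(\I_q+\Agl^{\mathsf T}\Agl)^{-1}$. We then push it through the orthogonal change of variables
\[
\begin{bmatrix}\boldsymbol\alpha(t)\\ \boldsymbol\beta(t)\end{bmatrix}=\mathbf V^{\mathsf T}\mathbf z(t).
\]
A linear image of a Gaussian is Gaussian. Hence the joint conditional law of $(\boldsymbol\alpha(t),\boldsymbol\beta(t))$ is $\N(\mathbf V^{\mathsf T}\boldsymbol\mu_{z\mid y}(t),\mathbf V^{\mathsf T}\boldsymbol\Sigma_{z\mid y}\mathbf V)$, and it remains only to compute these two moments.

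\textbf{Diagonalising the Hessian.} Using the compact SVD \eqref{eq:svd} and $\Agl\mathbf V_0=\mathbf0$, we get $\Agl\mathbf V=[\,\mathbf U_r\mathbf S_r\ \ \mathbf0\,]$. It follows that
\[
\mathbf V^{\mathsf T}\Agl^{\mathsf T}\Agl\mathbf V=\diag(\mathbf S_r^2,\mathbf0_{q-r}),
\]
where $\mathbf U_r^{\mathsf T}\mathbf U_r=\I_r$ is used. Since $\mathbf V$ is orthogonal, $\mathbf V^{\mathsf T}(\I_q+\Agl^{\mathsf T}\Agl)^{-1}\mathbf V=(\I_q+\mathbf V^{\mathsf T}\Agl^{\mathsf T}\Agl\mathbf V)^{-1}$. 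This equals $\diag((\I_r+\mathbf S_r^2)^{-1},\I_{q-r})=\diag(\mathbf C_\alpha,\I_{q-r})$, which is the claimed covariance.

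\textbf{The mean.} We compute
\[
\mathbf V^{\mathsf T}\boldsymbol\mu_{z\mid y}(t)=\mathbf V^{\mathsf T}\boldsymbol\Sigma_{z\mid y}\mathbf V\,\mathbf V^{\mathsf T}\Agl^{\mathsf T}\widetilde{\mathbf y}(t),
\]
inserting $\mathbf V\mathbf V^{\mathsf T}=\I_q$. Next, $\mathbf V^{\mathsf T}\Agl^{\mathsf T}=(\Agl\mathbf V)^{\mathsf T}$ has blocks $\mathbf S_r\mathbf U_r^{\mathsf T}$ and $\mathbf0$. Multiplying by the block-diagonal covariance gives the upper block $\mathbf C_\alpha\mathbf S_r\mathbf U_r^{\mathsf T}\widetilde{\mathbf y}(t)=\overline{\boldsymbol\alpha}(t)$ and the lower block $\mathbf0$. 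This yields \eqref{eq:joint-posterior}.

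\textbf{Consequences.} Independence follows because the joint law is Gaussian with zero cross-covariance. The marginal of $\boldsymbol\beta(t)$ is $\N(\mathbf0,\I_{q-r})$, which coincides with the prior law computed just before the theorem.

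As an alternative route, one could instead apply Bayes' rule directly to \eqref{eq:alpha-beta-model}. There the likelihood is free of $\boldsymbol\beta(t)$ and the prior factorises, so the posterior factorises. Completing the square in $\boldsymbol\alpha$ then gives the same moments.

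There is no real obstacle here. The only point needing care is the bookkeeping: $\mathbf V$ must be orthogonal, which was established, so that the inverse commutes with the conjugation. The degenerate cases $r=0$ or $r=q$ should be noted as trivially covered, with the corresponding blocks empty.
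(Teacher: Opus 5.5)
Your proof is correct, but your main route differs from the paper's. The paper argues in the $(\boldsymbol\alpha,\boldsymbol\beta)$ coordinates with Bayes' rule, which is exactly your ``alternative route''. The prior factorizes across the two blocks, and by \eqref{eq:alpha-beta-model} the likelihood does not depend on $\boldsymbol\beta(t)$. So the $\boldsymbol\beta$-factor of the posterior density is the prior factor, and one completes the square only in $\boldsymbol\alpha$.

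You instead take the latent posterior $\N(\boldsymbol\mu_{z\mid y}(t),\boldsymbol\Sigma_{z\mid y})$ already derived in \cref{sec:model} and conjugate it by the orthogonal matrix $\mathbf V$. You use $\Agl\mathbf V=[\,\mathbf U_r\mathbf S_r\ \ \mathbf 0\,]$ to block-diagonalize $\I_q+\Agl^{\mathsf T}\Agl$. Each step checks out, including $\mathbf V^{\mathsf T}(\I_q+\Agl^{\mathsf T}\Agl)^{-1}\mathbf V=(\I_q+\mathbf V^{\mathsf T}\Agl^{\mathsf T}\Agl\mathbf V)^{-1}$ and the block computation of the mean.

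What your version buys is economy and reuse. It produces the spectral form of $\boldsymbol\Sigma_{z\mid y}$ in one step, which the paper re-derives separately in the proof of \cref{thm:additive}. That theorem then follows simply by mapping back through $\Bgamma\mathbf V$.

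What the paper's version buys is a structural explanation of why the null block is untouched. There, $p(\boldsymbol\beta\mid\widetilde{\mathbf y})=p(\boldsymbol\beta)$ and the posterior independence of the blocks follow from density factorization alone. This needs neither Gaussian closure under linear maps nor the Gaussian-specific equivalence of zero cross-covariance and independence. It would therefore persist for any prior that factorizes across the visible and null blocks.
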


\begin{proof}
By orthogonality of $\mathbf V$, the prior factorizes as
\[
  p(\boldsymbol\alpha(t),\boldsymbol\beta(t))
  =
  p(\boldsymbol\alpha(t))
  p(\boldsymbol\beta(t)),
\]
with
\[
  \boldsymbol\alpha(t)\sim\N(\mathbf0,\I_r),
  \qquad
  \boldsymbol\beta(t)\sim\N(\mathbf0,\I_{q-r}).
\]
From \cref{eq:alpha-beta-model}, the likelihood is independent of $\boldsymbol\beta(t)$. Hence, Bayes' rule gives
\begin{align*}
  p\!\left(
    \boldsymbol\alpha(t),
    \boldsymbol\beta(t)
    \mid
    \widetilde{\mathbf y}(t)
  \right)
  &\propto
  \exp\!\left[
    -\frac12
    \left\|
      \widetilde{\mathbf y}(t)
      -
      \mathbf U_r\mathbf S_r
      \boldsymbol\alpha(t)
    \right\|_2^2
    -
    \frac12
    \|\boldsymbol\alpha(t)\|_2^2
  \right]
  \\
  &\qquad\times
  \exp\!\left[
    -\frac12
    \|\boldsymbol\beta(t)\|_2^2
  \right].
\end{align*}
Using $\mathbf U_r^{\mathsf T}\mathbf U_r=\I_r$, the terms involving $\boldsymbol\alpha(t)$ reduce, up to an additive constant, to
\[
  -\frac12
  \boldsymbol\alpha(t)^{\mathsf T}
  \left(
    \I_r+\mathbf S_r^2
  \right)
  \boldsymbol\alpha(t)
  +
  \boldsymbol\alpha(t)^{\mathsf T}
  \mathbf S_r
  \mathbf U_r^{\mathsf T}
  \widetilde{\mathbf y}(t).
\]
Completing the square gives
\[
  \boldsymbol\alpha(t)
  \mid
  \widetilde{\mathbf y}(t)
  \sim
  \N\!\left(
    \overline{\boldsymbol\alpha}(t),
    \mathbf C_{\alpha}
  \right).
\]
The remaining factor
\[
  \exp\!\left[
    -\frac12
    \|\boldsymbol\beta(t)\|_2^2
  \right]
\]
is identical to the prior factor of $\boldsymbol\beta(t)$. Therefore,
\[
  \boldsymbol\beta(t)
  \mid
  \widetilde{\mathbf y}(t)
  \sim
  \N(\mathbf0,\I_{q-r}),
\]
and the posterior factorizes between the two coordinate blocks, yielding \cref{eq:joint-posterior}.
\end{proof}

\subsection{Source-space posterior covariance decomposition}

The likelihood-visible and likelihood-null decomposition derived in the standardized latent coordinates can now be mapped back to the physical
source space.

\begin{theorem}[Exact source-space posterior covariance decomposition]
\label{thm:additive}

For every $t=1,\ldots,T$, the conditional source posterior mean is
\begin{equation}
  \boldsymbol\mu_{\mathrm{post}}(t)
  =
  \Bgamma\mathbf V_r
  \overline{\boldsymbol\alpha}(t),
  \label{eq:spectral-posterior-mean}
\end{equation}
and the conditional posterior covariance admits the exact decomposition
\begin{equation}
  \Sigmapost
  =
  \boldsymbol\Sigma_{\mathrm{post}}^{\mathrm{vis}}
  +
  \boldsymbol\Sigma_{\mathrm{post}}^{\mathrm{null}},
  \label{eq:additive}
\end{equation}
where
\begin{align}
  \boldsymbol\Sigma_{\mathrm{post}}^{\mathrm{vis}}
  :=
  \Bgamma\mathbf V_r
  \left(\I_r+\mathbf S_r^2\right)^{-1}
  \mathbf V_r^{\mathsf T}
  \Bgamma^{\mathsf T}, \qquad
  \boldsymbol\Sigma_{\mathrm{post}}^{\mathrm{null}}
  :=
  \Bgamma\mathbf V_0
  \mathbf V_0^{\mathsf T}
  \Bgamma^{\mathsf T}.
  \label{eq:visible-null-block}
\end{align}
Both covariance components are positive semidefinite. Moreover, the likelihood-null posterior covariance is identical to the prior covariance restricted to $\Null(\Agl)$ and mapped to source space through $\Bgamma$.
\end{theorem}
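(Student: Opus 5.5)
The plan is to push the joint Gaussian posterior of \cref{thm:joint-visible-null} through the deterministic linear map from latent coordinates to source space. By \cref{eq:alpha-beta} and \cref{eq:latent-map}, each source vector satisfies
\[
\mathbf x(t)=\Bgamma\mathbf z(t)=\Bgamma\mathbf V_r\boldsymbol\alpha(t)+\Bgamma\mathbf V_0\boldsymbol\beta(t)=\Bgamma\mathbf V\begin{bmatrix}\boldsymbol\alpha(t)\\ \boldsymbol\beta(t)\end{bmatrix}.
\]
Conditional on $\widetilde{\mathbf y}(t)$, a linear image of a Gaussian vector is again Gaussian. Its mean is the image of the mean, and its covariance is $\Bgamma\mathbf V\,\mathbf C\,\mathbf V^{\mathsf T}\Bgamma^{\mathsf T}$, where $\mathbf C$ is the block-diagonal covariance in \cref{eq:joint-posterior}. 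Conditioning on $\widetilde{\mathbf y}(t)$ is the same as conditioning on $\mathbf y(t)$, because $\Sigmal^{-1/2}$ is invertible. Hence this Gaussian is the conditional source posterior \cref{eq:conditional-posterior}.

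First, I will identify the mean. The posterior mean of $\boldsymbol\beta(t)$ is zero, so the mean reduces to $\Bgamma\mathbf V_r\overline{\boldsymbol\alpha}(t)$, which is \cref{eq:spectral-posterior-mean}.

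Second, I will expand the covariance block-wise. Because the off-diagonal blocks vanish, the product splits into exactly two terms:
\begin{itemize}
\item $\Bgamma\mathbf V_r\mathbf C_\alpha\mathbf V_r^{\mathsf T}\Bgamma^{\mathsf T}$, which by the definition of $\mathbf C_\alpha$ in \cref{eq:alpha-moments} is $\boldsymbol\Sigma_{\mathrm{post}}^{\mathrm{vis}}$;
\item $\Bgamma\mathbf V_0\I_{q-r}\mathbf V_0^{\mathsf T}\Bgamma^{\mathsf T}$, which is $\boldsymbol\Sigma_{\mathrm{post}}^{\mathrm{null}}$.
\end{itemize}
This gives \cref{eq:additive}. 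As a consistency check, the sum equals $\Bgamma\mathbf V(\I_q+\mathbf V^{\mathsf T}\Agl^{\mathsf T}\Agl\mathbf V)^{-1}\mathbf V^{\mathsf T}\Bgamma^{\mathsf T}=\Bgamma(\I_q+\Agl^{\mathsf T}\Agl)^{-1}\Bgamma^{\mathsf T}$. This uses the orthogonality of $\mathbf V$ and $\mathbf V^{\mathsf T}\Agl^{\mathsf T}\Agl\mathbf V=\diag(\mathbf S_r^2,\mathbf 0)$, and it matches the latent-coordinate formula for $\Sigmapost$ derived earlier.

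Third, positive semidefiniteness follows from the congruence form of each term. For any $\mathbf w$, we have $\mathbf w^{\mathsf T}\boldsymbol\Sigma_{\mathrm{post}}^{\mathrm{vis}}\mathbf w=\mathbf u^{\mathsf T}(\I_r+\mathbf S_r^2)^{-1}\mathbf u\ge0$ with $\mathbf u=\mathbf V_r^{\mathsf T}\Bgamma^{\mathsf T}\mathbf w$, since that diagonal matrix has positive entries $1/(1+s_i^2)$. Likewise, $\mathbf w^{\mathsf T}\boldsymbol\Sigma_{\mathrm{post}}^{\mathrm{null}}\mathbf w=\|\mathbf V_0^{\mathsf T}\Bgamma^{\mathsf T}\mathbf w\|_2^2\ge0$.

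Finally, I will interpret the null component. The prior covariance of $\mathbf z(t)$ is $\I_q$, and its restriction (orthogonal compression) to $\Null(\Agl)$ is $\mathbf V_0\mathbf V_0^{\mathsf T}\I_q\mathbf V_0\mathbf V_0^{\mathsf T}=\mathbf V_0\mathbf V_0^{\mathsf T}$. Equivalently, the prior covariance of $\mathbf V_0\boldsymbol\beta(t)$ is $\mathbf V_0\mathbf V_0^{\mathsf T}$. By \cref{thm:joint-visible-null}, $\boldsymbol\beta(t)$ has the same law under the posterior as under the prior, so the posterior covariance of $\Bgamma\mathbf V_0\boldsymbol\beta(t)$ equals its prior covariance $\Bgamma\mathbf V_0\mathbf V_0^{\mathsf T}\Bgamma^{\mathsf T}$.

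I expect no step to be a real obstacle. The only point needing care is stating precisely what ``prior covariance restricted to $\Null(\Agl)$'' means; I will fix it as the covariance of the prior projection $\mathbf V_0\mathbf V_0^{\mathsf T}\mathbf z(t)$, mapped through $\Bgamma$. This works because the block independence in \cref{thm:joint-visible-null} makes the cross terms vanish.
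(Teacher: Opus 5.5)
Your proof is correct, but it is organized differently from the paper's. You obtain everything as the pushforward of the block-diagonal joint posterior of $(\boldsymbol\alpha(t),\boldsymbol\beta(t))$ from Theorem~\ref{thm:joint-visible-null} under the linear map $\Bgamma\mathbf V$. The mean and both covariance blocks can then be read off at once, and exact additivity follows immediately from the vanishing cross-covariance.

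The paper instead argues at the matrix level. It writes $\I_q+\Agl^{\mathsf T}\Agl=\mathbf V\diag(\I_r+\mathbf S_r^2,\I_{q-r})\mathbf V^{\mathsf T}$ and inverts using orthogonality of $\mathbf V$ to obtain $\boldsymbol\Sigma_{z\mid y}=\mathbf V_r(\I_r+\mathbf S_r^2)^{-1}\mathbf V_r^{\mathsf T}+\mathbf V_0\mathbf V_0^{\mathsf T}$. It then recomputes the latent mean from $\boldsymbol\Sigma_{z\mid y}\Agl^{\mathsf T}\widetilde{\mathbf y}(t)$ with $\Agl^{\mathsf T}=\mathbf V_r\mathbf S_r\mathbf U_r^{\mathsf T}$, and maps the result through $\Bgamma$. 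Theorem~\ref{thm:joint-visible-null} is used only for the final claim that the null block is unchanged from the prior. In other words, your ``consistency check'' is essentially the paper's main argument.

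Each route has an advantage. Yours reuses the earlier theorem and makes the structural reason for exactness explicit, which is the point the paper later makes in the remark after the theorem. The paper's route is a self-contained algebraic identity tied directly to the closed form $\Sigmapost=\Bgamma(\I_q+\Agl^{\mathsf T}\Agl)^{-1}\Bgamma^{\mathsf T}$. It also produces the explicit split of $\boldsymbol\Sigma_{z\mid y}$ that the proof of Theorem~\ref{thm:reduction} later cancels against $\I_q$. Your version yields the same latent identity as $\mathbf V\mathbf C\mathbf V^{\mathsf T}$, so nothing is lost. Your positive-semidefiniteness and null-block arguments coincide with the paper's.
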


\begin{proof}
From the compact SVD
\[
  \Agl
  =
  \mathbf U_r\mathbf S_r\mathbf V_r^{\mathsf T},
\]
we obtain
\[
  \Agl^{\mathsf T}\Agl
  =
  \mathbf V_r
  \mathbf S_r^2
  \mathbf V_r^{\mathsf T}.
\]
Using the orthogonal decomposition
\[
  \I_q
  =
  \mathbf V_r\mathbf V_r^{\mathsf T}
  +
  \mathbf V_0\mathbf V_0^{\mathsf T},
\]
it follows that
\begin{align*}
  \I_q+\Agl^{\mathsf T}\Agl
  &=
  \mathbf V_r
  \left(\I_r+\mathbf S_r^2\right)
  \mathbf V_r^{\mathsf T}
  +
  \mathbf V_0\mathbf V_0^{\mathsf T}
  \\
  &=
  \begin{bmatrix}
    \mathbf V_r & \mathbf V_0
  \end{bmatrix}
  \begin{bmatrix}
    \I_r+\mathbf S_r^2 & \mathbf0\\
    \mathbf0 & \I_{q-r}
  \end{bmatrix}
  \begin{bmatrix}
    \mathbf V_r & \mathbf V_0
  \end{bmatrix}^{\mathsf T}.
\end{align*}
Since
$\mathbf V=[\,\mathbf V_r\ \mathbf V_0\,]$ is orthogonal, inversion gives
\begin{align*}
  \boldsymbol\Sigma_{z\mid y}
  &=
  \left(
    \I_q+\Agl^{\mathsf T}\Agl
  \right)^{-1}
  \\
  &=
  \mathbf V_r
  \left(\I_r+\mathbf S_r^2\right)^{-1}
  \mathbf V_r^{\mathsf T}
  +
  \mathbf V_0
  \mathbf V_0^{\mathsf T}.
\end{align*}
For the latent posterior mean,
\[
  \boldsymbol\mu_{z\mid y}(t)
  =
  \boldsymbol\Sigma_{z\mid y}
  \Agl^{\mathsf T}
  \widetilde{\mathbf y}(t),
\]
and
\[
  \Agl^{\mathsf T}
  =
  \mathbf V_r
  \mathbf S_r
  \mathbf U_r^{\mathsf T}.
\]
Therefore,
\begin{align*}
  \boldsymbol\mu_{z\mid y}(t)
  &=
  \left[
    \mathbf V_r
    \left(\I_r+\mathbf S_r^2\right)^{-1}
    \mathbf V_r^{\mathsf T}
    +
    \mathbf V_0\mathbf V_0^{\mathsf T}
  \right]
  \mathbf V_r
  \mathbf S_r
  \mathbf U_r^{\mathsf T}
  \widetilde{\mathbf y}(t)
  \\
  &=
  \mathbf V_r
  \left(\I_r+\mathbf S_r^2\right)^{-1}
  \mathbf S_r
  \mathbf U_r^{\mathsf T}
  \widetilde{\mathbf y}(t)
  \\
  &=
  \mathbf V_r
  \overline{\boldsymbol\alpha}(t),
\end{align*}
where we used
$\mathbf V_r^{\mathsf T}\mathbf V_r=\I_r$ and
$\mathbf V_0^{\mathsf T}\mathbf V_r=\mathbf0$.

Mapping the latent posterior through
$\mathbf x(t)=\Bgamma\mathbf z(t)$ gives
\[
  \boldsymbol\mu_{\mathrm{post}}(t)
  =
  \Bgamma
  \boldsymbol\mu_{z\mid y}(t)
  =
  \Bgamma\mathbf V_r
  \overline{\boldsymbol\alpha}(t),
\]
and
\begin{align*}
  \Sigmapost
  &=
  \Bgamma
  \boldsymbol\Sigma_{z\mid y}
  \Bgamma^{\mathsf T}
  \\
  &=
  \Bgamma\mathbf V_r
  \left(\I_r+\mathbf S_r^2\right)^{-1}
  \mathbf V_r^{\mathsf T}
  \Bgamma^{\mathsf T}
  +
  \Bgamma\mathbf V_0
  \mathbf V_0^{\mathsf T}
  \Bgamma^{\mathsf T}.
\end{align*}
This proves \cref{eq:additive,eq:visible-null-block}.

Finally, for every $\mathbf a\in\R^N$,
\[
  \mathbf a^{\mathsf T}
  \boldsymbol\Sigma_{\mathrm{post}}^{\mathrm{vis}}
  \mathbf a
  =
  \left\|
    \left(\I_r+\mathbf S_r^2\right)^{-1/2}
    \mathbf V_r^{\mathsf T}
    \Bgamma^{\mathsf T}\mathbf a
  \right\|_2^2
  \geq0,
\]
and
\[
  \mathbf a^{\mathsf T}
  \boldsymbol\Sigma_{\mathrm{post}}^{\mathrm{null}}
  \mathbf a
  =
  \left\|
    \mathbf V_0^{\mathsf T}
    \Bgamma^{\mathsf T}\mathbf a
  \right\|_2^2
  \geq0.
\]
Hence both covariance components are positive semidefinite. It remains to verify that the likelihood-null covariance is unchanged
from the corresponding prior covariance. Since
$\mathbf z(t)\sim\N(\mathbf0,\I_q)$, the orthogonal projection of the latent prior onto $\Null(\Agl)=\Range(\mathbf V_0)$ is
\[
  \mathbf z_{\mathrm{null}}(t)
  :=
  \mathbf V_0\mathbf V_0^{\mathsf T}\mathbf z(t),
\]
with prior covariance
\begin{align*}
  \Cov\!\left(
    \mathbf z_{\mathrm{null}}(t)
  \right)
  &=
  \mathbf V_0\mathbf V_0^{\mathsf T}
  \I_q
  \mathbf V_0\mathbf V_0^{\mathsf T}
  \\
  &=
  \mathbf V_0\mathbf V_0^{\mathsf T},
\end{align*}
where we used
$\mathbf V_0^{\mathsf T}\mathbf V_0=\I_{q-r}$.

On the other hand, by
\cref{thm:joint-visible-null},
\[
  \boldsymbol\beta(t)
  \mid
  \widetilde{\mathbf y}(t)
  \sim
  \N(\mathbf0,\I_{q-r}),
\]
and therefore the posterior covariance of the latent likelihood-null component
\[
  \mathbf z_{\mathrm{null}}(t)
  =
  \mathbf V_0\boldsymbol\beta(t)
\]
is
\[
  \Cov\!\left(
    \mathbf z_{\mathrm{null}}(t)
    \mid
    \widetilde{\mathbf y}(t)
  \right)
  =
  \mathbf V_0\mathbf V_0^{\mathsf T}.
\]
Hence, the likelihood-null covariance is identical before and after conditioning.

Finally, mapping this component to physical source space through
$\mathbf x=\Bgamma\mathbf z$ gives
\[
  \Bgamma
  \mathbf V_0\mathbf V_0^{\mathsf T}
  \Bgamma^{\mathsf T}
  =
  \boldsymbol\Sigma_{\mathrm{post}}^{\mathrm{null}},
\]
which is therefore exactly the prior covariance restricted to the likelihood-null subspace and mapped to source space.
\end{proof}

\begin{remark}
The likelihood-visible and likelihood-null subspaces are orthogonal in the standardized latent space because
$\mathbf V_r^{\mathsf T}\mathbf V_0=\mathbf0$. Their source-space images $\Bgamma\mathbf V_r$ and $\Bgamma\mathbf V_0$ need not be Euclidean-orthogonal when the source prior is anisotropic. Nevertheless, the covariance decomposition remains exact because the visible and null latent posterior blocks have zero cross-covariance. The linear map $\Bgamma$ therefore preserves the additive positive-semidefinite covariance identity in \cref{eq:additive}. 
\end{remark}

\subsection{Likelihood-driven covariance reduction}

For linear-Gaussian Bayesian inverse problems, conditioning yields the standard
prior-minus-reduction representation
\citep{Flath2011,BuiThanh2013,Spantini2015},
\[
  \Sigmapost
  =
  \Sigmag-\Rdata,
  \qquad
  \Rdata
  :=
  \Sigmag\mathbf L^{\mathsf T}
  \boldsymbol\Sigma_y^{-1}
  \mathbf L\Sigmag .
\]
Here, $\Rdata$ represents the prior covariance removed by conditioning on the measurements. Our interest is in resolving this reduction spectrally within the prior-supported source space and determining how it is distributed between the \emph{likelihood-visible} and \emph{likelihood-null} components.

In the prior-standardized coordinates, define the covariance reduction by
\begin{equation}
  \mathbf R_z
  :=
  \I_q-\boldsymbol\Sigma_{z\mid y}.
  \label{eq:standardized-reduction-def}
\end{equation}

\begin{theorem}[Spectral localization of covariance reduction]
\label{thm:reduction}

The standardized covariance reduction in
\cref{eq:standardized-reduction-def} admits the spectral representation
\begin{equation}
  \mathbf R_z
  =
  \mathbf V_r
  \mathbf S_r^2
  \left(\I_r+\mathbf S_r^2\right)^{-1}
  \mathbf V_r^{\mathsf T}.
  \label{eq:standardized-reduction}
\end{equation}
Its corresponding representation in physical source coordinates is
\begin{equation}
  \Rdata
  :=
  \Bgamma\mathbf R_z\Bgamma^{\mathsf T}
  =
  \Bgamma\mathbf V_r
  \mathbf S_r^2
  \left(\I_r+\mathbf S_r^2\right)^{-1}
  \mathbf V_r^{\mathsf T}
  \Bgamma^{\mathsf T}.
  \label{eq:reduction}
\end{equation}

Define the prior covariance carried by the \emph{likelihood-visible} and \emph{likelihood-null} subspaces as
\[
  \boldsymbol\Sigma_{\mathrm{prior}}^{\mathrm{vis}}
  :=
  \Bgamma\mathbf V_r\mathbf V_r^{\mathsf T}\Bgamma^{\mathsf T},
  \qquad
  \boldsymbol\Sigma_{\mathrm{prior}}^{\mathrm{null}}
  :=
  \Bgamma\mathbf V_0\mathbf V_0^{\mathsf T}\Bgamma^{\mathsf T}.
\]
Together with the posterior decomposition established in
\cref{thm:additive}, these components satisfy
\begin{equation}
  \boldsymbol\Sigma_{\mathrm{prior}}^{\mathrm{vis}}
  =
  \boldsymbol\Sigma_{\mathrm{post}}^{\mathrm{vis}}
  +
  \Rdata,
  \qquad
  \boldsymbol\Sigma_{\mathrm{prior}}^{\mathrm{null}}
  =
  \boldsymbol\Sigma_{\mathrm{post}}^{\mathrm{null}}.
  \label{eq:block-accounting}
\end{equation}
Moreover,
\[
  \Rdata\succeq0,
  \qquad
  \Range(\Rdata)
  =
  \Range(\Bgamma\mathbf V_r).
\]
Thus, covariance reduction induced by conditioning is localized entirely to the mapped \emph{likelihood-visible} subspace, whereas the covariance carried by the \emph{likelihood-null} subspace is preserved.
\end{theorem}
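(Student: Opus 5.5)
The plan is to work entirely in the standardized coordinates and reuse the spectral form of $\boldsymbol\Sigma_{z\mid y}$ obtained in the proof of \cref{thm:additive}, namely
\[
  \boldsymbol\Sigma_{z\mid y}
  =
  \mathbf V_r(\I_r+\mathbf S_r^2)^{-1}\mathbf V_r^{\mathsf T}
  +\mathbf V_0\mathbf V_0^{\mathsf T}.
\]
Writing $\I_q=\mathbf V_r\mathbf V_r^{\mathsf T}+\mathbf V_0\mathbf V_0^{\mathsf T}$ and subtracting, the null blocks cancel. The visible block becomes $\mathbf V_r[\I_r-(\I_r+\mathbf S_r^2)^{-1}]\mathbf V_r^{\mathsf T}$. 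Since the matrices involved are diagonal, $\I_r-(\I_r+\mathbf S_r^2)^{-1}=\mathbf S_r^2(\I_r+\mathbf S_r^2)^{-1}$ entrywise. This gives \cref{eq:standardized-reduction}. Congruence by $\Bgamma$ then yields \cref{eq:reduction} immediately.

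I will also check consistency with the standard definition $\Rdata=\Sigmag\mathbf L^{\mathsf T}\Sigmay^{-1}\mathbf L\Sigmag$ quoted just before the theorem. Since $\Sigmapost=\Bgamma\boldsymbol\Sigma_{z\mid y}\Bgamma^{\mathsf T}$ was already shown to agree with \cref{eq:posterior-summary}, and $\Sigmag=\Bgamma\Bgamma^{\mathsf T}$, we get $\Sigmag-\Sigmapost=\Bgamma(\I_q-\boldsymbol\Sigma_{z\mid y})\Bgamma^{\mathsf T}$. The two definitions of $\Rdata$ therefore coincide.

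For the block accounting \cref{eq:block-accounting}, I will add the visible terms directly:
\[
  \Bgamma\mathbf V_r\bigl[(\I_r+\mathbf S_r^2)^{-1}+\mathbf S_r^2(\I_r+\mathbf S_r^2)^{-1}\bigr]\mathbf V_r^{\mathsf T}\Bgamma^{\mathsf T}
  =
  \Bgamma\mathbf V_r\mathbf V_r^{\mathsf T}\Bgamma^{\mathsf T}.
\]
This is $\boldsymbol\Sigma_{\mathrm{prior}}^{\mathrm{vis}}$. The null identity holds by definition, since the expressions for $\boldsymbol\Sigma_{\mathrm{prior}}^{\mathrm{null}}$ and $\boldsymbol\Sigma_{\mathrm{post}}^{\mathrm{null}}$ in \cref{thm:additive} are literally identical.

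For positive semidefiniteness, I will set $\mathbf D:=\mathbf S_r^2(\I_r+\mathbf S_r^2)^{-1}$, which is diagonal with entries $s_i^2/(1+s_i^2)\in(0,1)$. Then
\[
  \mathbf a^{\mathsf T}\Rdata\mathbf a=\|\mathbf D^{1/2}\mathbf V_r^{\mathsf T}\Bgamma^{\mathsf T}\mathbf a\|_2^2\ge0,
\]
exactly as in the proof of \cref{thm:additive}.

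The only step needing care is the range identity. I will write $\Rdata=\mathbf G\mathbf G^{\mathsf T}$ with $\mathbf G:=\Bgamma\mathbf V_r\mathbf D^{1/2}$. For any real matrix, $\Range(\mathbf G\mathbf G^{\mathsf T})=\Range(\mathbf G)$, because $\Null(\mathbf G\mathbf G^{\mathsf T})=\Null(\mathbf G^{\mathsf T})$: if $\mathbf G\mathbf G^{\mathsf T}\mathbf a=0$ then $\|\mathbf G^{\mathsf T}\mathbf a\|^2=0$. Taking orthogonal complements gives the claim. Since $\mathbf D^{1/2}$ is invertible, $\Range(\mathbf G)=\Range(\Bgamma\mathbf V_r)$, which completes the statement.

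I expect this range step to be the main obstacle, and there only mildly. The point to keep in mind is that $\Bgamma$ is not square when $q<N$, so the argument must use the factorization $\mathbf G\mathbf G^{\mathsf T}$ rather than any inverse of $\Bgamma$. For the same reason, the result should be stated as the range of $\Bgamma\mathbf V_r$ in $\R^N$, not as an $r$-dimensional claim. That claim does hold anyway, because $\Bgamma$ has full column rank.
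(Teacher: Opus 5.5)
Your proposal is correct and follows the paper's proof essentially step by step: cancel the null block in $\I_q-\boldsymbol\Sigma_{z\mid y}$, apply the scalar resolvent identity, map by congruence with $\Bgamma$, and read off the block accounting, positive semidefiniteness and range from the positive-definite diagonal middle factor. Two of your steps go slightly beyond the paper. First, you check explicitly that $\Bgamma\mathbf R_z\Bgamma^{\mathsf T}$ agrees with $\Sigmag\mathbf L^{\mathsf T}\Sigmay^{-1}\mathbf L\Sigmag$. Second, your $\mathbf G\mathbf G^{\mathsf T}$ argument for the range is more complete; the paper simply asserts the range identity from full column rank of $\Bgamma\mathbf V_r$, which is not actually needed once the middle factor is positive definite.
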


\begin{proof}
Using the standardized posterior decomposition established in \cref{thm:additive}, the likelihood-null contribution cancels in \cref{eq:standardized-reduction-def}, giving
\[
  \mathbf R_z
  =
  \mathbf V_r
  \left[
    \I_r-
    \left(\I_r+\mathbf S_r^2\right)^{-1}
  \right]
  \mathbf V_r^{\mathsf T}.
\]
The elementary resolvent identity
\[
  \I_r-
  \left(\I_r+\mathbf S_r^2\right)^{-1}
  =
  \mathbf S_r^2
  \left(\I_r+\mathbf S_r^2\right)^{-1}
\]
then yields
\[
  \mathbf R_z
  =
  \mathbf V_r
  \mathbf S_r^2
  \left(\I_r+\mathbf S_r^2\right)^{-1}
  \mathbf V_r^{\mathsf T},
\]
which proves \cref{eq:standardized-reduction}.

Mapping the standardized reduction through $\Bgamma$ gives
\[
  \Rdata
  =
  \Bgamma\mathbf R_z\Bgamma^{\mathsf T},
\]
and substitution of \cref{eq:standardized-reduction} yields
\cref{eq:reduction}.

Combining \cref{eq:reduction} with the
\emph{likelihood-visible} posterior component from
\cref{thm:additive} gives
\[
  \boldsymbol\Sigma_{\mathrm{prior}}^{\mathrm{vis}}
  -
  \boldsymbol\Sigma_{\mathrm{post}}^{\mathrm{vis}}
  =
  \Rdata.
\]
The \emph{likelihood-null} posterior component in
\cref{thm:additive} coincides with its prior counterpart, yielding
\cref{eq:block-accounting}.

Finally,
\[
  \mathbf S_r^2
  \left(\I_r+\mathbf S_r^2\right)^{-1}
  \succ0,
\]
because $\mathbf S_r$ contains only strictly positive singular values. Hence $\Rdata\succeq0$. Since $\Bgamma\mathbf V_r$ has full column rank,
\[
  \Range(\Rdata)
  =
  \Range(\Bgamma\mathbf V_r).
\]
\end{proof}

\begin{remark}
The prior-preconditioned data-misfit Hessian and its dominant spectral directions have been used extensively to construct low-rank posterior covariance approximations and prior-to-posterior covariance updates in
linear-Gaussian Bayesian inverse problems
\citep{Flath2011,BuiThanh2013,Spantini2015}. The emphasis of these approaches is primarily computational, with the dominant data-informed directions retained to approximate the posterior covariance.

The present analysis instead retains the complete positive spectrum together with the exact \emph{likelihood-null} subspace. This permits a complete visible/null accounting of the covariance reduction in the prior-supported source space: the reduction is localized to the mapped \emph{likelihood-visible} subspace, while the covariance associated with the
\emph{likelihood-null} subspace is preserved from prior to posterior.
\end{remark}

\subsection{Prior support and likelihood visibility in physical source space}
\label{sec:physical-support-geometry}

The likelihood-null structure has so far been characterized in the
prior-standardized coordinates through $\Null(\Agl)$. Its interpretation in
physical source coordinates requires distinguishing two conceptually
independent geometric properties. The source-prior covariance determines
\emph{prior support} through $\Range(\Sigmag)$ and
$\Null(\Sigmag)$, whereas the lead field determines
\emph{likelihood visibility} through $\Range(\mathbf L^{\mathsf T})$ and
$\Null(\mathbf L)$. The former specifies which source directions are assigned
prior variance, while the latter specifies which source directions can produce
a sensor response.

Because $\Sigmag$ is symmetric positive semidefinite,
\[
  \R^N
  =
  \Range(\Sigmag)
  \oplus
  \Null(\Sigmag),
\]
whereas the fundamental subspace decomposition of the lead field gives
\[
  \R^N
  =
  \Range(\mathbf L^{\mathsf T})
  \oplus
  \Null(\mathbf L).
\]
These two decompositions give rise to four canonical intersections,
\[
  \Range(\Sigmag)\cap\Null(\mathbf L),
  \qquad
  \Range(\Sigmag)\cap\Range(\mathbf L^{\mathsf T}),
\]
\[
  \Null(\Sigmag)\cap\Null(\mathbf L),
  \qquad
  \Null(\Sigmag)\cap\Range(\mathbf L^{\mathsf T}),
\]
which distinguish prior-supported from prior-unsupported directions and
likelihood-null from likelihood-visible directions. The two properties are
independent: a likelihood-null direction may either be supported or excluded
by the prior, and a prior-supported direction may be either visible or null to
the lead field.

\begin{figure}[H]
\centering
\begin{tikzpicture}[
    x=1.00cm,
    y=1.00cm,
    every node/.style={
        font=\scriptsize,
        align=center,
        inner sep=1pt
    }
]

\def\rx{5.8}
\def\ry{4.0}

\begin{scope}
  \clip (0,0) ellipse (\rx cm and \ry cm);

  \fill[gray!12]
    (-\rx,0) rectangle (0,\ry);

  \fill[orange!10]
    (0,0) rectangle (\rx,\ry);

  \fill[green!13]
    (-\rx,-\ry) rectangle (0,0);

  \fill[blue!12]
    (0,-\ry) rectangle (\rx,0);

  \fill[
    pattern=north east lines,
    pattern color=blue!60
  ]
    (4.45,-3.05)
    -- (5.15,-2.55)
    -- (5.30,0)
    -- (4.82,0)
    -- cycle;
\end{scope}

\draw[thick] (0,0) ellipse (\rx cm and \ry cm);
\draw[thick] (-\rx,0) -- (\rx,0);
\draw[thick] (0,-\ry) -- (0,\ry);

\node[font=\small] at (0,4.55)
  {Source space $\R^N$};

\node[
  anchor=east,
  font=\scriptsize
] at (-5.50,1.85)
  {$\Null(\Sigmag)$};

\node[
  anchor=east,
  font=\scriptsize
] at (-5.50,-1.85)
  {$\Range(\Sigmag)$};

\node[
  anchor=north,
  font=\scriptsize
] at (-2.65,-3.90)
  {$\Null(\mathbf L)$};

\node[
  anchor=north,
  font=\scriptsize
] at (2.65,-3.90)
  {$\Range(\mathbf L^{\mathsf T})$};

\node[
  text width=2.45cm
] at (-2.65,1.85)
{
  prior-null\\
  likelihood-null
};

\node[
  text width=2.45cm
] at (2.65,1.85)
{
  prior-null\\
  likelihood-visible
};

\node[
  text width=2.75cm
] at (-2.65,-1.55)
{
  \textbf{prior-supported}\\
  \textbf{likelihood-null}\\[1.5mm]
  $\mathcal U_{\boldsymbol\gamma}
   =\Range(\Sigmag)$\\[-0.3mm]
  $\cap\,\Null(\mathbf L)$
};

\node[
  text width=2.65cm
] at (2.35,-1.10)
{
  \textbf{prior-supported}\\
  \textbf{likelihood-visible}
};

\node[
  font=\tiny,
  text width=1.05cm
] at (1.35,-2.55)
{
  weakly\\
  informed
};

\draw[
  -{Latex[length=1.8mm]},
  line width=0.5pt
]
  (1.95,-2.55) -- (3.05,-2.55);

\node[
  font=\tiny,
  text width=1.00cm
] at (3.55,-2.55)
{
  strongly\\
  informed
};

\end{tikzpicture}

\caption{
Schematic relation between prior support and likelihood visibility in physical
source space. Prior support is determined by $\Sigmag$, whereas sensing
visibility is determined by $\mathbf L$. The prior-supported likelihood-null
subspace is
$\mathcal U_{\boldsymbol\gamma}
=\Range(\Sigmag)\cap\Null(\mathbf L)$.
Within the prior-supported likelihood-visible sector, measurement
informativeness varies continuously from weakly to strongly informed
directions. The four sectors provide a schematic geometric interpretation and
need not form a direct-sum decomposition of $\R^N$.
}
\label{fig:prior-likelihood-geometry}
\end{figure}

The lower-left sector in \cref{fig:prior-likelihood-geometry} is of particular
interest for conditional uncertainty quantification. It consists of source
perturbations that remain admissible under the prior but are invisible to the
measurement operator. The standardized likelihood-null space derived above
maps exactly onto this physical subspace.

\begin{theorem}[Prior-supported component of the likelihood-null source space]
\label{thm:physical-null}

For fixed $\boldsymbol\gamma$, define
\begin{equation}
  \mathcal U_{\boldsymbol\gamma}
  :=
  \Range(\Sigmag)\cap\Null(\mathbf L).
  \label{eq:physical-nullspace}
\end{equation}
Then
\begin{equation}
  \mathcal U_{\boldsymbol\gamma}
  =
  \Bgamma\Null(\Agl)
  =
  \Range(\Bgamma\mathbf V_0).
  \label{eq:null-equivalence}
\end{equation}
Hence, the image of the standardized \emph{likelihood-null} subspace in
physical source coordinates is precisely the prior-supported component of the
lead-field nullspace.
\end{theorem}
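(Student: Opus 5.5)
The plan is to prove the two equalities in \cref{eq:null-equivalence} separately. The second, $\Bgamma\Null(\Agl)=\Range(\Bgamma\mathbf V_0)$, is immediate from the fact that the columns of $\mathbf V_0$ form a basis of $\Null(\Agl)$: every $\mathbf w\in\Null(\Agl)$ has the form $\mathbf V_0\mathbf c$, so $\Bgamma\mathbf w=\Bgamma\mathbf V_0\mathbf c$, and conversely. The substance lies in the first equality, $\mathcal U_{\boldsymbol\gamma}=\Bgamma\Null(\Agl)$. I would prove it by a double inclusion, using two elementary facts about the factor $\Bgamma$ defined in \cref{eq:active-factor}.

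The first fact is that $\Range(\Sigmag)=\Range(\Bgamma)$. This follows from $\Sigmag=\Bgamma\Bgamma^{\mathsf T}$ together with the general identity $\Range(\mathbf B\mathbf B^{\mathsf T})=\Range(\mathbf B)$. One can also see it directly: both spaces equal the span of the canonical basis vectors $\mathbf e_i$ with $i\in\mathcal I_{\boldsymbol\gamma}$, since $\Bgamma=\mathbf P_{\boldsymbol\gamma}\diag(\sqrt{\gamma_i})$ and the diagonal factor is invertible.

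The second fact is that $\Bgamma$ has full column rank $q$. The columns $\sqrt{\gamma_i}\,\mathbf e_i$ for $i\in\mathcal I_{\boldsymbol\gamma}$ are nonzero and mutually orthogonal, so $\Bgamma$ is injective on $\R^q$.

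For the inclusion $\Bgamma\Null(\Agl)\subseteq\mathcal U_{\boldsymbol\gamma}$, take $\mathbf w\in\Null(\Agl)$ and set $\mathbf x=\Bgamma\mathbf w$. Then $\mathbf x\in\Range(\Bgamma)=\Range(\Sigmag)$. Moreover, $\Sigmal^{-1/2}\mathbf L\mathbf x=\Agl\mathbf w=\mathbf0$, and since $\Sigmal^{-1/2}$ is invertible (because $\Sigmal\succ0$), this gives $\mathbf L\mathbf x=\mathbf0$.

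For the reverse inclusion, take $\mathbf x\in\Range(\Sigmag)\cap\Null(\mathbf L)$. By the first fact, $\mathbf x=\Bgamma\mathbf w$ for some $\mathbf w\in\R^q$. Then $\Agl\mathbf w=\Sigmal^{-1/2}\mathbf L\mathbf x=\mathbf0$, so $\mathbf w\in\Null(\Agl)$, and hence $\mathbf x\in\Bgamma\Null(\Agl)$. The injectivity of $\Bgamma$ is not needed for set equality. I would still remark that it makes $\mathbf w$ unique and gives $\dim\mathcal U_{\boldsymbol\gamma}=q-r$, i.e.\ $\Bgamma$ maps $\Null(\Agl)$ isomorphically onto $\mathcal U_{\boldsymbol\gamma}$.

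The only step that needs any care is the identification $\Range(\Sigmag)=\Range(\Bgamma)$ in the singular case. This is where avoiding the pseudoinverse matters. The diagonal structure makes it trivial, and I would state it via the general $\Range(\mathbf B\mathbf B^{\mathsf T})=\Range(\mathbf B)$ argument: if $\mathbf B\mathbf B^{\mathsf T}\mathbf u$ is orthogonal to all of $\Range(\mathbf B)$, then $\|\mathbf B^{\mathsf T}\mathbf u\|^2=0$. Because the argument does not use diagonality, it also covers the general-factor extension mentioned in the earlier remark.
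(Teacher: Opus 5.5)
Your proposal is correct and follows the paper's proof essentially step for step. Like the paper, you prove the first equality by double inclusion, using $\Range(\Sigmag)=\Range(\Bgamma)$ and the invertibility of $\Sigmal^{-1/2}$, and you obtain the second from $\Null(\Agl)=\Range(\mathbf V_0)$. Your extras are a justification of $\Range(\Sigmag)=\Range(\Bgamma)$, which the paper uses without comment, and the observation that $\Bgamma$ is injective, so that $\dim\mathcal U_{\boldsymbol\gamma}=q-r$; these are useful additions but do not change the argument.
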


\begin{proof}
First, let $\mathbf v\in\Null(\Agl)$. By definition,
\[
  \Agl\mathbf v
  =
  \Sigmal^{-1/2}\mathbf L\Bgamma\mathbf v
  =
  \mathbf0.
\]
Since $\Sigmal^{-1/2}$ is invertible,
\[
  \mathbf L\Bgamma\mathbf v=\mathbf0.
\]
Moreover,
\[
  \Bgamma\mathbf v
  \in
  \Range(\Bgamma)
  =
  \Range(\Sigmag).
\]
Therefore,
\[
  \Bgamma\mathbf v
  \in
  \Range(\Sigmag)\cap\Null(\mathbf L)
  =
  \mathcal U_{\boldsymbol\gamma},
\]
and hence
\[
  \Bgamma\Null(\Agl)
  \subseteq
  \mathcal U_{\boldsymbol\gamma}.
\]

Conversely, let
$\mathbf h\in\mathcal U_{\boldsymbol\gamma}$. Since
\[
  \mathbf h
  \in
  \Range(\Sigmag)
  =
  \Range(\Bgamma),
\]
there exists $\mathbf v\in\R^q$ such that
\[
  \mathbf h=\Bgamma\mathbf v.
\]
Because $\mathbf h\in\Null(\mathbf L)$,
\[
  \mathbf L\mathbf h=\mathbf0,
\]
and therefore
\[
  \Agl\mathbf v
  =
  \Sigmal^{-1/2}\mathbf L\Bgamma\mathbf v
  =
  \Sigmal^{-1/2}\mathbf L\mathbf h
  =
  \mathbf0.
\]
Thus $\mathbf v\in\Null(\Agl)$, implying
\[
  \mathcal U_{\boldsymbol\gamma}
  \subseteq
  \Bgamma\Null(\Agl).
\]
Combining the two inclusions yields
\[
  \mathcal U_{\boldsymbol\gamma}
  =
  \Bgamma\Null(\Agl).
\]

Finally, since the columns of $\mathbf V_0$ form an orthonormal basis of
$\Null(\Agl)$,
\[
  \Null(\Agl)=\Range(\mathbf V_0),
\]
and hence
\[
  \Bgamma\Null(\Agl)
  =
  \Range(\Bgamma\mathbf V_0).
\]
This proves \cref{eq:null-equivalence}.
\end{proof}

\begin{remark}
Prior support and likelihood visibility are independent properties. A fitted
zero variance $\widehat\gamma_i=0$ excludes the corresponding source
coordinate from the fitted prior-supported source space, irrespective of
whether that direction is visible or null to the lead field. Conversely, a
likelihood-null perturbation $\mathbf h\in\Null(\mathbf L)$ may lie either
inside or outside $\Range(\widehat{\Sigmag})$. Only the intersection
\[
  \Range(\widehat{\Sigmag})\cap\Null(\mathbf L)
\]
contributes likelihood-null uncertainty to the conditional posterior, whereas
directions in $\Null(\widehat{\Sigmag})$ carry no conditional posterior
variance. 
\end{remark}

\subsection{Spectral markers}

The singular values of the noise-whitened, prior-standardized operator quantify
how strongly the prior-supported spectral directions are represented in the
measurements. For the $i$th likelihood-visible right singular vector
$\mathbf v_i$, $i=1,\ldots,r$,
\[
  \Agl\mathbf v_i
  =
  s_i\mathbf u_i,
\]
where $\mathbf u_i$ is the corresponding left singular vector. Since $\mathbf v_i$ has unit norm in the standardized source coordinates, $s_i>0$ measures the magnitude of its noise-whitened sensor response.

By \cref{eq:pphessian}, the same direction satisfies
\[
  \mathbf H_{\boldsymbol\gamma,\boldsymbol\lambda}\mathbf v_i
  =
  s_i^2\mathbf v_i.
\]
We therefore define the \emph{spectral information value}
\[
  \kappa_i:=s_i^2.
\]
Its interpretation follows directly from the prior standardization. Since
\[
  \mathbf z\sim\mathcal N(\mathbf 0,\I_q),
\]
the prior precision along every unit spectral direction is one, whereas the
likelihood contributes
\[
  \mathbf v_i^{\mathsf T}
  \Agl^{\mathsf T}\Agl
  \mathbf v_i
  =
  s_i^2
  =
  \kappa_i.
\]
Hence, the \emph{spectral information value} $\kappa_i$ measures the likelihood precision relative to the unit prior precision along the $i$th standardized spectral direction. The corresponding posterior precision is therefore
\[
  \underbrace{1}_{\text{prior precision}}
  +
  \underbrace{\kappa_i}_{\text{likelihood precision}}
  =
  \underbrace{1+\kappa_i}_{\text{posterior precision}}.
\]
In the likelihood-visible coordinates, the posterior covariance derived above
gives
\[
  \Var(\alpha_i\mid\widetilde{\mathbf y})
  =
  \frac{1}{1+\kappa_i}.
\]
Since $\Var(\alpha_i)=1$ under the standardized prior, we define the
\emph{variance-retention fraction}
\begin{equation}
  \tau_i
  :=
  \frac{
    \Var(\alpha_i\mid\widetilde{\mathbf y})
  }{
    \Var(\alpha_i)
  }
  =
  \frac{1}{1+\kappa_i}
  =
  \frac{1}{1+s_i^2}.
  \label{eq:variance-ratio}
\end{equation}
Thus, the \emph{variance-retention fraction} $\tau_i$ gives the fraction of prior variance that remains after conditioning along the $i$th standardized spectral direction.

The complementary \emph{variance-reduction fraction} is
\begin{equation}
  \rho_i
  :=
  1-\tau_i
  =
  \frac{\kappa_i}{1+\kappa_i}
  =
  \frac{s_i^2}{1+s_i^2},
  \label{eq:reduction-fraction}
\end{equation}
so that
\[
  \tau_i+\rho_i=1.
\]
The \emph{variance-reduction fraction} $\rho_i$ therefore measures the fraction of prior variance removed by conditioning along the same spectral direction.


The three markers are equivalent representations of the same spectral information. We use $\kappa_i$ to express the likelihood precision relative to the standardized prior precision, and $\tau_i$ and $\rho_i$ to express the fractions of prior variance retained and removed by conditioning, respectively. In particular,
\[
  \kappa_i\gg1
  \quad\Longrightarrow\quad
  \tau_i\approx0,
  \qquad
  \rho_i\approx1,
\]
corresponding to a strongly data-informed direction, whereas
\[
  0<\kappa_i\ll1
  \quad\Longrightarrow\quad
  \tau_i\approx1,
  \qquad
  \rho_i\approx0,
\]
corresponding to a weakly informed but still likelihood-visible direction. For an exactly likelihood-null direction,
\[
  \kappa_i=0,
  \qquad
  \tau_i=1,
  \qquad
  \rho_i=0,
\]
so that conditioning leaves its prior variance unchanged.

In the homoscedastic case $\Sigmal=\sigma^2\I_M$, let $\bar s_i$ denote the corresponding singular value of $\mathbf L\Bgamma$. Since
\[
  \Agl
  =
  \sigma^{-1}\mathbf L\Bgamma,
\]
we have
\[
  s_i
  =
  \frac{\bar s_i}{\sigma},
  \qquad
  \kappa_i
  =
  \frac{\bar s_i^2}{\sigma^2}.
\]
Thus, the \emph{spectral information value} combines the lead-field/prior geometry with the measurement-noise scale.

For the $i$th likelihood-visible spectral direction, define its physical
source-space representation by
\[
  \boldsymbol\psi_i
  :=
  \Bgamma\mathbf v_i
  \in\R^N.
\]
Its rank-one prior covariance contribution satisfies
\begin{equation}
  \underbrace{
    \boldsymbol\psi_i\boldsymbol\psi_i^{\mathsf T}
  }_{\text{prior contribution}}
  =
  \underbrace{
    \tau_i
    \boldsymbol\psi_i\boldsymbol\psi_i^{\mathsf T}
  }_{\text{posterior contribution}}
  +
  \underbrace{
    \rho_i
    \boldsymbol\psi_i\boldsymbol\psi_i^{\mathsf T}
  }_{\text{likelihood-driven reduction}}.
  \label{eq:modewise-physical}
\end{equation}
Hence, the \emph{variance-retention fraction} and \emph{variance-reduction fraction} provide the normalized, direction-wise counterpart of the prior-to-posterior covariance accounting derived above.

Because a spectral direction generally combines several source coordinates,
$\boldsymbol\psi_i=\Bgamma\mathbf v_i$ need not coincide with a canonical
source direction. Its covariance contribution therefore cannot, in general,
be reduced to a scalar expression such as $\gamma_i\tau_i$.

The interpretation of the three spectral markers is summarized in
\cref{tab:regimes}.

\begin{table}[H]
\centering
\caption{
Interpretation of the \emph{spectral information value} $\kappa$,
\emph{variance-retention fraction} $\tau$, and
\emph{variance-reduction fraction} $\rho$.
}
\label{tab:regimes}
\begin{tabular}{@{}>{\raggedright\arraybackslash}p{37mm}
                    >{\centering\arraybackslash}p{20mm}
                    >{\centering\arraybackslash}p{20mm}
                    >{\raggedright\arraybackslash}p{57mm}@{}}
\toprule
Spectral regime
& Retention fraction $\tau$
& Reduction fraction $\rho$
& Interpretation \\
\midrule

Strongly data-informed,
$\kappa\gg1$
&
$\tau\approx0$
&
$\rho\approx1$
&
The measurements remove most of the prior variance along the direction.\\

Balanced information,
$\kappa=1$
&
$\tau=1/2$
&
$\rho=1/2$
&
Prior and likelihood precision are equal, so half of the prior variance
remains and half is removed.\\

Weakly data-informed but visible,
$0<\kappa\ll1$
&
$\tau\approx1$
&
$\rho\approx0$
&
The likelihood contributes only weak information, so most of the prior
variance remains after conditioning.\\

Exactly likelihood-null,
$\kappa=0$
&
$\tau=1$
&
$\rho=0$
&
The measurements provide no information along the direction, and its prior variance is preserved in the posterior.\\

\bottomrule
\end{tabular}
\end{table}

\section{Numerical results}
\label{sec:numerical}

\subsection{Simulation setup}
\label{sec:simulation-setup}

We use a subject-specific MEG forward model derived from the CamCAN resource and processed with MNE-Python \citep{TaylorEtAl2017,GramfortEtAl2014}. The forward solution was constructed from an individual T1-weighted MRI using a boundary-element head model, with a minimum source-to-inner-skull distance of $3$ mm. The original forward model comprises $204$ MEG gradiometer channels and $4686$ candidate cortical source locations. For the present numerical study, we use a precomputed fixed-orientation subset,
\(\mathbf L\in\mathbb R^{204\times1284},\)
while preserving the same source orientations in both simulation and inverse reconstruction. Retaining all sensors while reducing the source
grid makes the Type-II fitting and complete spectral covariance analysis computationally tractable without altering the strongly underdetermined M/EEG setting.

We simulate $T=10$ time samples with $\texttt{nnz}=10$ active source
locations selected uniformly without replacement. The corresponding source
amplitudes are drawn independently from a standard Gaussian distribution,
yielding a sparse source matrix
$\mathbf X_{\mathrm{true}}\in\mathbb R^{1284\times10}$ and noise-free
sensor measurements
\[
    \mathbf Y_{\mathrm{signal}}
    =
    \mathbf L\mathbf X_{\mathrm{true}}.
\]

Sensor noise is modeled as zero-mean Gaussian with a random diagonal
covariance. A positive-semidefinite matrix is drawn from a Wishart
distribution with $M$ degrees of freedom and scale
$10^{-3}\mathbf I_M$, and its diagonal is retained to define
$\boldsymbol\Sigma_{\mathrm{noise}}$. The simulated noise
$\mathbf Y_{\mathrm{noise}}$ is then scaled relative to the signal energy,
and the final measurements are generated as
\begin{equation}
    \mathbf Y
    =
    \mathbf Y_{\mathrm{signal}}
    +
    \left(
        \frac{1-\alpha_{\mathrm{SNR}}}{\alpha_{\mathrm{SNR}}}
        \frac{
            \|\mathbf Y_{\mathrm{signal}}\|_{\mathrm F}
        }{
            \|\mathbf Y_{\mathrm{noise}}\|_{\mathrm F}
        }
    \right)
    \mathbf Y_{\mathrm{noise}},
    \label{eq:simulation-model}
\end{equation}
where $\alpha_{\mathrm{SNR}}$ controls the signal-to-noise level. We set
$\alpha_{\mathrm{SNR}}=0.99$, corresponding to a Frobenius-norm SNR of approximately $39.91\,\mathrm{dB}$. Random seed $42$ is used throughout, and the same simulated realization is provided to both inverse solvers.

We compare two adaptive Type-II Bayesian inverse solvers. The first is Bayesian minimum norm (BMN), which learns a single source-variance scale and a homoscedastic noise variance
\citep{WipfEtAl2007,SekiharaNagarajan2015}. To reduce sensitivity to spatial variations in lead-field norm, BMN is applied in sLORETA-normalized coordinates \citep{PascualMarqui2002}. Let
$\mathbf W=\diag(w_1,\ldots,w_N)$ denote the fixed normalization and write
\[
    \mathbf x(t)=\mathbf W\mathbf u(t),
    \qquad
    \mathbf u(t)\mid\gamma
    \sim
    \mathcal N(\mathbf0,\gamma\mathbf I_N).
\]
After undoing the solver's internal numerical scaling, the physical source and noise covariances used in the spectral analysis are
\[
    \boldsymbol\Sigma_{\gamma}^{\mathrm{BMN}}
    =
    \gamma_{\mathrm{phys}}
    \mathbf W\mathbf W^{\mathsf T}
    =
    \diag(
        \gamma_{\mathrm{phys}}w_1^2,\ldots,
        \gamma_{\mathrm{phys}}w_N^2
    ),
    \qquad
    \boldsymbol\Sigma_{\lambda}^{\mathrm{BMN}}
    =
    \lambda\mathbf I_M.
\]
Thus, BMN shares a single variance scale in the normalized coordinates and retains the full source support, while its physical source covariance is generally depth-weighted rather than proportional to $\mathbf I_N$.

The second solver is $\gamma$-MAP (Champagne), a sparse Bayesian learning method that assigns a separate variance hyperparameter to each source \citep{WipfNagarajan2009,WipfEtAl2010}. Its source covariance is
\[
    \boldsymbol\Sigma_{\gamma}
    =
    \diag(\gamma_1,\ldots,\gamma_N),
\]
and adaptive noise learning is performed with a sensor-wise diagonal noise covariance
\citep{HashemiEtAl2021,CaiEtAl2021},
\[
    \boldsymbol\Sigma_{\lambda}
    =
    \diag(\lambda_1,\ldots,\lambda_M).
\]
For both solvers, the Type-II marginal-likelihood objective is nonconvex in the covariance hyperparameters. We therefore employ convex-bounding majorization-minimization update rules to construct tractable surrogate problems and iteratively update the source and noise hyperparameters
\citep{HashemiEtAl2021,CaiEtAl2021}.

For $\gamma$-MAP, very small positive source variances may remain after the Type-II iterations instead of becoming exactly zero. We therefore apply the relative threshold
\[
    \widehat{\mathcal I}_{\gamma}
    :=
    \left\{
        j:
        \widehat{\gamma}_j
        \geq
        10^{-3}
        \max_k \widehat{\gamma}_k
    \right\}.
\]
The model is subsequently refitted on the restricted lead field
$\mathbf L_{\widehat{\mathcal I}_{\gamma}}$, and the resulting hyperparameters are used for the spectral analysis. Hence, the prior-supported dimension is
\[
    q=|\widehat{\mathcal I}_{\gamma}|.
\]
Let $\mathbf P_{\widehat{\mathcal I}_{\gamma}}
\in\mathbb R^{N\times q}$ denote the canonical selection matrix whose columns are the standard basis vectors associated with the retained source indices. The fitted prior factor is then
\[
    \mathbf B_{\widehat{\boldsymbol\gamma}}
    =
    \mathbf P_{\widehat{\mathcal I}_{\gamma}}
    \diag\!\left(
        \sqrt{\widehat{\gamma}_j}:
        j\in\widehat{\mathcal I}_{\gamma}
    \right).
\]
BMN with sLORETA normalization does not perform source pruning and therefore retains the complete source space, so that $q=N$.

After hyperparameter learning, the fitted source and noise covariance parameters are held fixed. For each method, the spectral analysis is based on the corresponding noise-whitened, prior-standardized operator
\[
    \mathbf A_{\widehat{\boldsymbol\gamma},
               \widehat{\boldsymbol\lambda}}
    =
    \widehat{\boldsymbol\Sigma}_{\lambda}^{-1/2}
    \mathbf L
    \mathbf B_{\widehat{\boldsymbol\gamma}},
\]
where $\mathbf B_{\widehat{\boldsymbol\gamma}}$ denotes the fitted prior factor associated with the corresponding solver. Its singular spectrum
is then used to evaluate $s_i$, $\kappa_i$, $\tau_i$, and $\rho_i$.

\subsection{Results}
\label{sec:numerical-results}

Table~\ref{tab:spectral-results} summarizes the fitted
prior-supported dimensions and the corresponding spectral structure of
the two Type-II models.

\begin{table}[H]
\centering
\begin{tabular}{@{}lrrrrr@{}}
\toprule
Method
& $q$
& $r$
& $q-r$
& $\kappa_i>1$
& $0<\kappa_i<1$ \\
\midrule
BMN with sLORETA
& 1284
& 204
& 1080
& 140
& 64
\\
$\gamma$-MAP
& 11
& 11
& 0
& 10
& 1
\\
\bottomrule
\end{tabular}

\caption{Spectral structure of the fitted models.}
\label{tab:spectral-results}
\end{table}

BMN with sLORETA normalization retains the complete
$1284$-dimensional prior-supported source space. The corresponding
noise-whitened, prior-standardized operator has rank $204$, leaving
$1080$ exact likelihood-null directions. Among the $204$
likelihood-visible directions, $140$ satisfy $\kappa_i>1$, so that the
likelihood precision exceeds the standardized prior precision, whereas
$64$ satisfy $0<\kappa_i<1$ and remain weakly informed. The information values span approximately
\[
    1.20\times10^{-4}
    \leq
    \kappa_i
    \leq
    8.31\times10^{5}.
\]

The fitted $\gamma$-MAP model yields a markedly different geometry.
After thresholding and refitting, only $11$ source locations remain in
the prior-supported space, of which $10$ coincide with the simulated
generators. The restricted operator has numerical rank $r=q=11$, so no exact likelihood-null direction remains within the fitted support. 
Ten directions satisfy $\kappa_i>1$, whereas the
remaining direction has
\[
    \kappa_{11}=0.656,
    \qquad
    \tau_{11}=0.604,
    \qquad
    \rho_{11}=0.396.
\]
Thus, this direction is still likelihood-visible, but only $39.6\%$ of its prior variance is removed by the measurements and $60.4\%$ remains
in the posterior. By contrast, the other ten directions have very large information values, with
\[
    \kappa_{\max}\approx1.25\times10^{6},
\]
and consequently $\rho_i$ is close to one.

The difference between the two fitted models is also visible directly in source space. Figure~\ref{fig:source-reconstruction-final} compares the simulated source pattern with the corresponding posterior-mean reconstructions. BMN with sLORETA normalization produces a dense and spatially diffuse estimate, whereas $\gamma$-MAP concentrates the reconstructed activity almost entirely on the simulated sparse support. In particular, the ten largest $\gamma$-MAP posterior-mean amplitudes coincide with the ten simulated source locations, while two of the ten simulated generators appear among the ten largest BMN with sLORETA amplitudes. \rev{To connect the reconstruction with the uncertainty accounting, the lower part of the figure additionally reports the trace shares of the exact likelihood-visible and likelihood-null posterior covariance blocks. The bars are normalized by the posterior trace of each fitted model, while the adjacent annotations report the corresponding posterior-to-prior trace ratio; this separates the composition of the remaining uncertainty from its absolute contraction relative to the fitted prior.}

\begin{figure}[!htbp]
    \centering
    \includegraphics[width=0.98\linewidth]
    {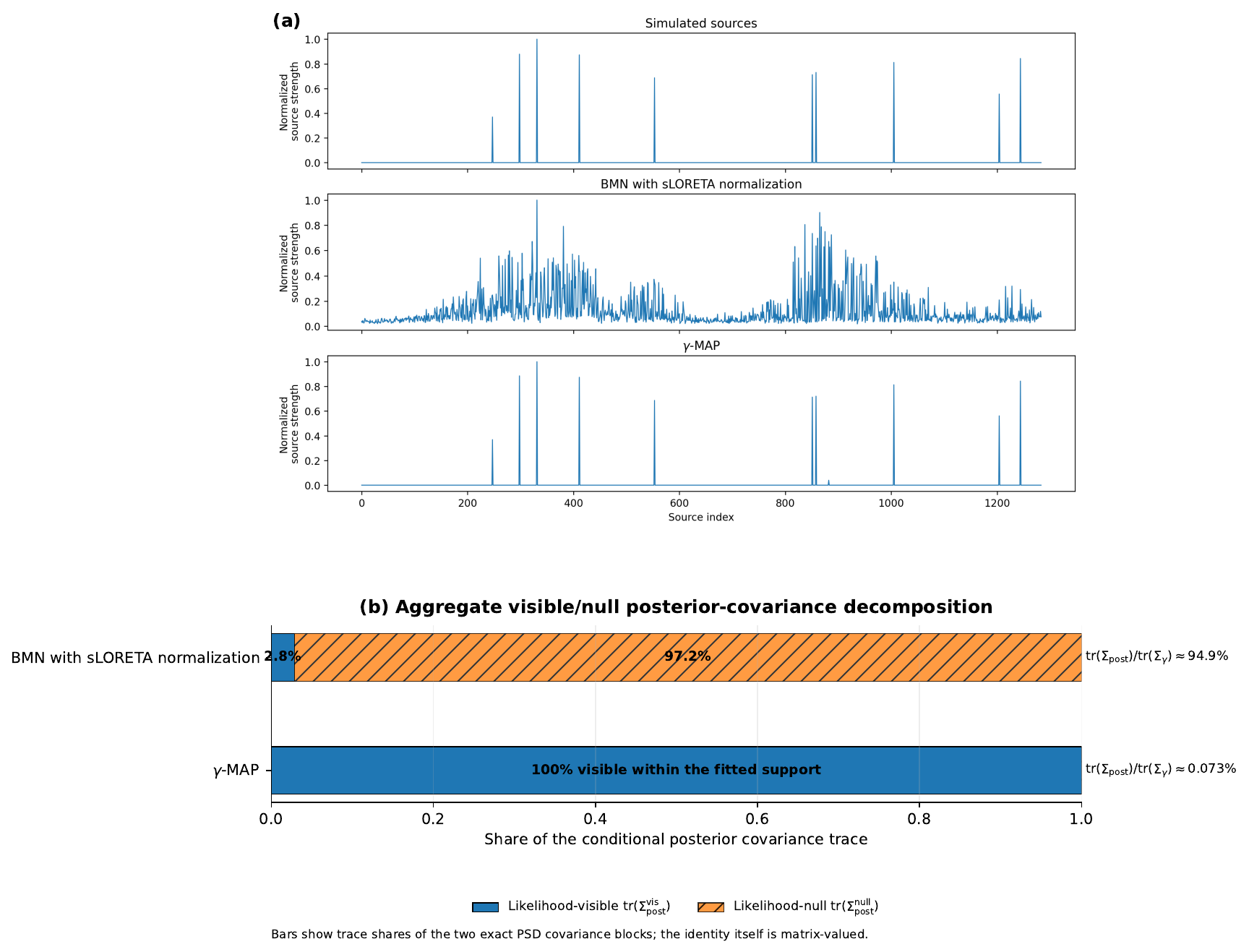}
    \caption{
    Normalized source reconstruction and aggregate posterior-covariance
    decomposition for the simulated example. Panel (a) shows the simulated
    sources, the posterior-mean reconstruction obtained with BMN with
    sLORETA normalization, and the reconstruction obtained with
    $\gamma$-MAP. \rev{Panel (b) summarizes the exact additive covariance
    identity at trace level. For BMN, approximately $2.8\%$ of the
    conditional posterior trace is carried by the likelihood-visible block
    and $97.2\%$ by the likelihood-null block; the total posterior trace is
    approximately $94.9\%$ of the fitted prior trace. For $\gamma$-MAP,
    the likelihood-null block is zero within the pruned fitted support and
    the remaining conditional posterior covariance is entirely
    likelihood-visible, while its total trace is approximately $0.073\%$
    of the fitted prior trace. The latter statement is conditional on the
    learned support and does not imply that the original full source-space
    lead field is injective.}
    }
    \label{fig:source-reconstruction-final}
\end{figure}

\rev{The trace summary quantifies the aggregate visible/null split, but it does not identify how strongly individual retained spectral directions are constrained by the measurements.} To relate the spectral markers to the known simulation support, we evaluate
the fraction of each physical spectral mode
$\boldsymbol\psi_i=\mathbf B_{\widehat{\boldsymbol\gamma}}\mathbf v_i$
concentrated on the true sources,
\[
    \eta_i
    =
    \frac{
        \sum_{j\in\mathcal S_{\rm true}}\psi_{ji}^{2}
    }{
        \|\boldsymbol\psi_i\|_2^2
    }.
\]
For the ten strongly informed $\gamma$-MAP directions,
$\eta_i>0.99998$, whereas the single weakly informed direction has
$\eta_{11}=0.392$ and is dominated by the additional retained source
outside the simulated support. Hence, the spectral markers distinguish
between directions that are merely retained by the Type-II fit and
those that are strongly supported by the measurements.

\begin{figure}[!htbp!htbp]
    \centering
    \includegraphics[width=0.88\linewidth]{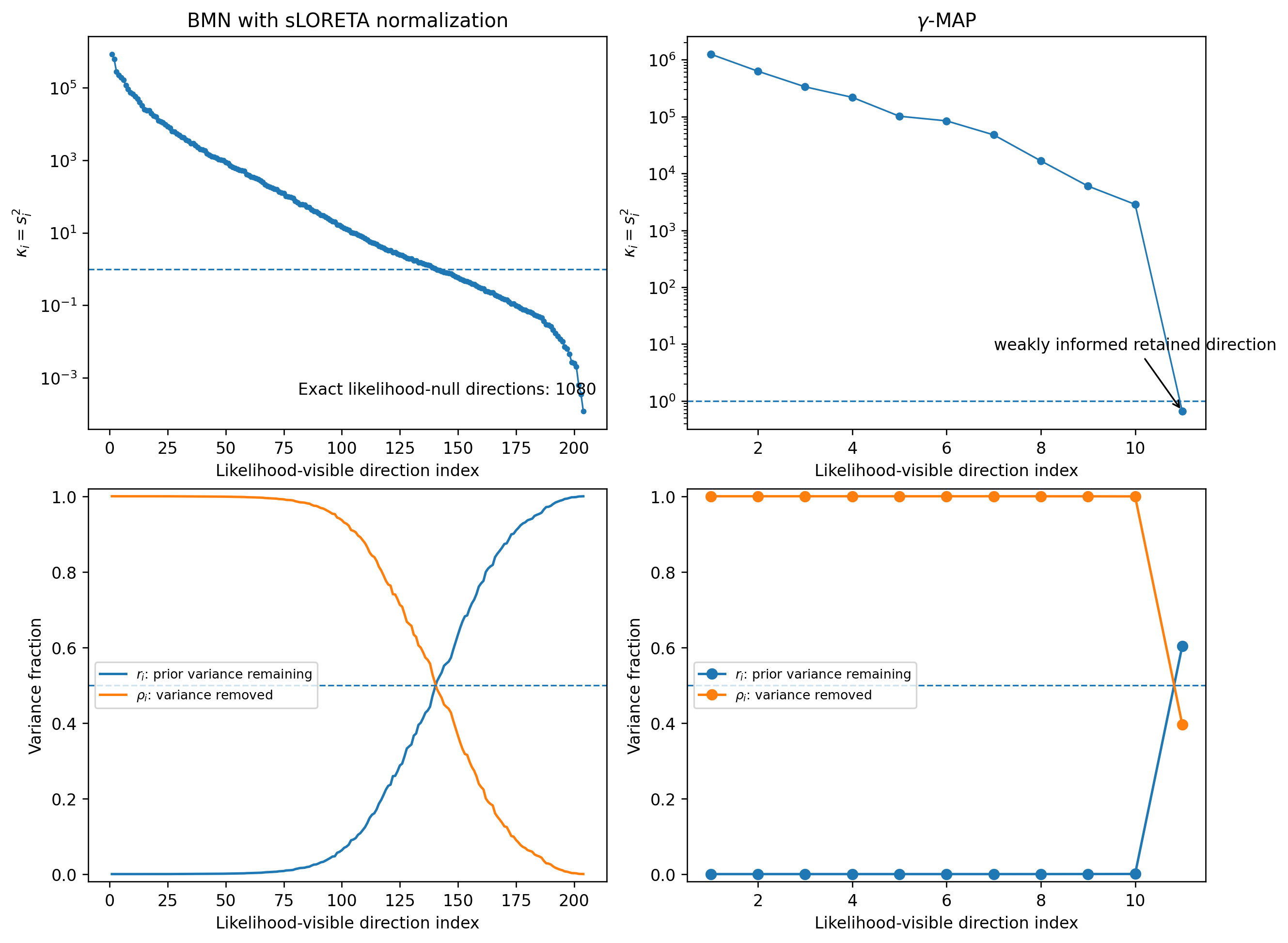}
    \caption{
    Spectral information and variance markers for BMN with sLORETA
    normalization and $\gamma$-MAP. The upper panels show
    $\kappa_i=s_i^2$ with the reference level $\kappa_i=1$.
    The lower panels show the corresponding fractions of prior variance
    remaining in the posterior, $\tau_i$, and removed by the data,
    $\rho_i$. Exact likelihood-null directions of BMN with sLORETA are
    reported separately and are not included among the positive
    singular values.
    }
    \label{fig:spectral-markers-final}
\end{figure}

The effect of the spectral structure on the posterior covariance can
also be quantified through the additive and subtractive identities.
At each source coordinate $j$,
\[
    (\boldsymbol\Sigma_{\rm post})_{jj}
    =
    (\boldsymbol\Sigma_{\rm post}^{\rm vis})_{jj}
    +
    (\boldsymbol\Sigma_{\rm post}^{\rm null})_{jj},
\]
and equivalently,
\[
    (\boldsymbol\Sigma_{\rm post})_{jj}
    =
    (\boldsymbol\Sigma_{\gamma})_{jj}
    -
    (\mathbf R_{\rm data})_{jj}.
\]

For BMN with sLORETA normalization, the covariance component associated
with the $1080$-dimensional exact likelihood-null subspace accounts for
approximately $97.2\%$ of the total conditional posterior trace.
This component remains unchanged from prior to posterior, and its
trace equals approximately $92.2\%$ of the fitted prior trace.
Conditioning removes approximately $5.1\%$ of the fitted prior trace,
leaving a total posterior trace equal to approximately $94.9\%$ of
the fitted prior trace.

For $\gamma$-MAP, the fitted support contains no exact
likelihood-null direction. Approximately $99.93\%$ of the fitted prior
trace is removed by conditioning, leaving about $0.073\%$ in the
posterior. The source-wise decomposition is particularly informative
for the retained support: for each of the ten simulated sources, at
least $99.68\%$ of the fitted prior variance is removed, whereas the
additional retained source keeps approximately $60.1\%$ of its prior
variance. This behavior agrees with the weak spectral marker
$\kappa_{11}=0.656$ and shows that the spectral and covariance
decompositions identify the same poorly constrained component from two
complementary perspectives.

The adaptive noise estimates are also close to the simulated noise
level. The mean true sensor-noise variance is
$3.10\times10^{-12}$. BMN with sLORETA normalization estimates the
common noise variance as $3.27\times10^{-12}$, whereas $\gamma$-MAP
estimates a sensor-wise diagonal noise covariance with mean variance
$2.99\times10^{-12}$.

Finally, the covariance identities are recovered numerically to high
precision. The relative residuals are approximately
$4.8\times10^{-13}$ for BMN with sLORETA normalization and
$1.5\times10^{-8}$ for $\gamma$-MAP, confirming consistency between
the directly computed conditional posterior covariance and its
spectral additive and prior-minus-reduction representations.

\section{Discussion and conclusion}
\label{sec:discussion}

The numerical results show that the geometry of the fitted prior support has a direct effect on how posterior uncertainty should be interpreted.
For BMN with sLORETA normalization, the full source space remains prior-supported, and the large likelihood-null subspace carries most of
the remaining uncertainty. In fact, the null component accounts for approximately $92.2\%$ of the fitted prior trace, although the nullspace
dimension itself represents about $84.1\%$ of the prior-supported coordinates. 


The $\gamma$-MAP result illustrates the complementary effect of Type-II
support adaptation. After pruning and refitting, the fitted prior support
becomes small enough that no exact likelihood-null direction remains
within the conditional model. Source directions outside the fitted support
have been assigned zero prior variance by the empirical-Bayes estimate
before the spectral decomposition is formed.

Importantly, this model-induced removal of variance does not imply that the
corresponding physical ambiguity has disappeared. In strongly ill-conditioned
source configurations, two nearby candidate sources may have highly similar
lead fields, yet Type-II optimization may retain one while assigning zero
variance to the other. The resulting conditional posterior then reports no
uncertainty along the pruned coordinate, although uncertainty about the
support selection itself remains unresolved. This is a direct consequence of
using plug-in hyperparameter estimates: the present covariance quantifies
uncertainty conditional on the fitted support and does not propagate
uncertainty over alternative hyperparameter or support configurations.

A related distinction concerns marginal and joint posterior degeneracy.
In our calibration study, $\gamma$-MAP is combined with sparse basis
field expansions (sFLEX) to mitigate zero-variance posterior marginals
at individual voxels \citep{HuseynovEtAlCalibration}. The spatially
extended basis functions can distribute uncertainty from retained
coefficients across voxels, but they do not restore variability outside
the span of the retained basis functions. With $q$ retained coefficients,
the source-space posterior covariance has rank at most $q$, even when
all voxelwise marginal variances are positive. Consequently, at least
$N-q$ source-space directions have zero posterior variance; when
$q\leq M$, there are at least $N-M$ such directions. Physically
admissible source configurations outside the retained span, including
spatially extended patterns, remain excluded by the fitted model.
Thus, resolving voxelwise marginal degeneracy does not resolve
degeneracy of the joint posterior.

The remaining weak spectral direction provides the complementary case: a
source direction can remain within the fitted prior support while still being
only weakly constrained by the measurements. Thus, prior-support selection,
likelihood informativeness, and uncertainty about the support-selection step
represent distinct aspects of the inverse problem and should be interpreted
separately.


The spectral markers are therefore useful beyond posterior-mean source reconstruction. A small positive $\kappa_i$ identifies a direction that
is represented in the measurements but only weakly informed relative to its prior uncertainty, so that  improved measurement precision may
further reduce its posterior variance. In contrast, $\kappa_i=0$ identifies an exact prior-supported likelihood-null direction, for which
increasing measurement precision alone cannot reduce uncertainty as long as the sensing geometry is unchanged. The corresponding covariance
identities provide the same distinction at the level of posterior uncertainty: they separate uncertainty that remains along visible directions from uncertainty preserved in the mapped nullspace, while the prior-minus-reduction form quantifies the covariance actually removed by the data.

Several limitations delimit the numerical conclusions. The present benchmark is based on a single controlled high-SNR realization, one fixed-orientation lead field, a small number of time samples, and plug-in Type-II covariance estimates. Moreover, BMN and $\gamma$-MAP
use different noise-covariance parameterizations, so the experiment compares complete adaptive Type-II pipelines rather than isolating source sparsity as the only changing factor. The relative support
threshold used for $\gamma$-MAP is also part of the reported procedure. A broader evaluation should therefore include repeated realizations and
multiple SNR levels, matched noise models, sensitivity to the support threshold, and calibration studies on both simulated and real M/EEG data. A fully Bayesian treatment of the source and noise hyperparameters would additionally propagate hyperparameter uncertainty that is not represented by the present conditional posterior covariance. Uncertainty in the forward model is likewise not represented,
since the lead-field matrix is treated as fixed throughout the analysis.

In summary, the proposed framework turns the conditional Gaussian posterior covariance into an exact spectral accounting of what is informed by the measurements, what remains invisible under the fitted sensing geometry and prior support, and how much covariance is removed by conditioning. The BMN and $\gamma$-MAP examples show why these
quantities must be interpreted jointly with the fitted prior support. This distinction is practically relevant when determining whether
residual uncertainty can be reduced by improving measurement precision, requires a change in sensing geometry, or has instead been removed from
the conditional model through prior-support adaptation.

\bibliographystyle{plainnat}
\bibliography{references}

\end{document}